\definecolor{dark-gray}{gray}{.35}
\definecolor{myorange}{RGB}{246, 164, 16}
\definecolor{mygreen}{RGB}{1, 100, 3}
\DeclareMathOperator*{\argsup}{arg\,sup}
\DeclareMathOperator*{\sigmoid}{\,sigmoid}
\theoremstyle{definition}
\newtheorem{definition}{Definition}[section]
\newtheorem{theorem}{Theorem}[section]
\newtheorem{corollary}{Corollary}[section]
\newtheorem{lemma}[theorem]{Lemma}
\title{On Relativistic $f$-Divergences}
\author{
  Alexia Jolicoeur-Martineau \\
  Lady Davis Institute\\
  MILA, Université de Montréal\\
  Montréal, Canada\\
  \texttt{alexia.jolicoeur-martineau@mail.mcgill.ca} \\
  %% examples of more authors
  %% \And
  %% Coauthor \\
  %% Affiliation \\
  %% Address \\
  %% \texttt{email} \\
  %% \AND
  %% Coauthor \\
  %% Affiliation \\
  %% Address \\
  %% \texttt{email} \\
  %% \And
  %% Coauthor \\
  %% Affiliation \\
  %% Address \\
  %% \texttt{email} \\
  %% \And
  %% Coauthor \\
  %% Affiliation \\
  %% Address \\
  %% \texttt{email} \\
}
\begin{document}
% \nipsfinalcopy is no longer used

\maketitle

\begin{abstract}
This paper provides a more rigorous look at Relativistic Generative Adversarial Networks (RGANs). We prove that the objective function of the discriminator is a statistical divergence for any concave function $f$ with minimal properties ($f(0)=0$, $f'(0) \neq 0$, $\sup_x f(x)>0$). We also devise a few variants of relativistic $f$-divergences. Wasserstein GAN was originally justified by the idea that the Wasserstein distance (WD) is most sensible because it is weak (i.e., it induces a weak topology). We show that the WD is weaker than $f$-divergences which are weaker than relativistic $f$-divergences. Given the good performance of RGANs, this suggests that WGAN does not performs well primarily because of the weak metric, but rather because of regularization and the use of a relativistic discriminator. We also take a closer look at estimators of relativistic $f$-divergences. We introduce the minimum-variance unbiased estimator (MVUE) for Relativistic paired GANs (RpGANs; originally called RGANs which could bring confusion) and show that it does not perform better. Furthermore, we show that the estimator of Relativistic average GANs (RaGANs) is only asymptotically unbiased, but that the finite-sample bias is small. Removing this bias does not improve performance.
\end{abstract}

\section{Introduction}

Generative adversarial networks (GANs) \citep{GAN} are a very popular approach to approximately generate data from a complex probability distribution using only samples of data (without any information on the true data distribution). Most notably, it has been very successful at generating photo-realistic images \citep{karras2017progressive} \citep{karras2018style}. It consists in a game between two neural networks, the generator $G$ and the discriminator $D$. The goal of $D$ is to classify real from fake (generated) data. The goal of $G$ is to generate fake data that appears to be real, thus "fooling" $D$ into thinking that fake data is actually real.

There are many GANs variants and most of them consist of changing the loss function of $D$. To name a few: Standard GAN (SGAN) \citep{GAN}, Least-Squares GAN (LSGAN) \citep{LSGAN}, Hinge-loss GAN (HingeGAN) \citep{miyato2018spectral}, Wasserstein GAN (WGAN) \citep{WGAN}. 

For most GAN variants, training $D$ is equivalent to estimating a divergence: SGAN estimates the Jensen–Shannon divergence (JSD), LSGAN estimates the Pearson $\chi^2$ divergence, HingeGAN estimates the Reverse-KL divergence, and WGAN estimates the Wasserstein distance. Even more generally, $f$-GANs \citep{F-GAN} estimate any $f$-divergence (which includes most of the popular divergences), while IPM-based GANs \citep{Fisher} estimate any Integral probability metric (IPM) \citep{muller1997integral}. Thus, intuitively, GANs can be thought as approximately minimizing a divergence (this is not technically correct; see \citet{jolicoeur2018beyonddivergence}).

Recently, \citet{RGAN} showed that IPM-based GANs possess a unique type of discriminator which they call a Relativistic Discriminator (RD). They explained that one can construct $f$-GANs while using a RD and that doing so improves the stability of the training and quality of generated data. They called this approach Relativistic GANs (RGANs). They proposed two variants: Relativistic paired GANs (RpGANs)\footnote{We added the word "paired" to better distinguish the variant with paired real/fake data (originally called RGANs) and the general approach called Relativistic GANs (RGANs).} and Relativistic Average GANs (RaGANs). 

\citet{RGAN} provided mathematical and intuitive arguments as to why having a Relativistic Discriminator (RD) may be helpful. However, they did not show that the loss functions are mathematically sensible as they did not show that these form statistical divergences. Furthermore, the estimators that they used were not the minimum-variance unbiased estimators (MVUE).

The contributions of this paper are the following:
\begin{enumerate}
	\item We prove that the objective functions of the discriminator in RGANs are divergences (relativistic $f$-divergences).
	\item We devise a few variants of Relativistic $f$-divergences.
	\item We show that the Wasserstein Distance is weaker than $f$-divergences which are weaker than relativistic $f$-divergences.
	\item We present the minimum-variance unbiased estimator (MVUE) of RpGANs and show that using it hinders the performance of the generator.
	\item We show that RaGANs are only asymptotically unbiased, but that the finite-sample bias is small. Removing this bias does not improve the performance of the generator.
\end{enumerate}

\section{Background}

For the rest of the paper, we focus on the critic $C(x)$ instead of the discriminator $D(x)$. The critic is the discriminator before applying the activation function ($D(x)=a(C(x))$, where $a$ is an activation function and $C(x) \in \mathbb{R}$). Intuitively, the critic can be thought as describing how realistic $x$ is. In the case of SGAN and HingeGAN, a large $C(x)$ means that $x$ is realistic, while a small $C(x)$ means that $x$ is not realistic. We use this notation because Relativistic GANs are defined in terms of the critic rather than the discriminator.

\subsection{Generative adversarial networks}

GANs can be defined very generally in the following way:
\begin{equation}
\sup_{C: \mathcal{X} \to \mathbb{R}} \mathbb{E}_{x \sim \mathbb{P}}\left[ f_1(C(x)) \right] + \mathbb{E}_{y \sim \mathbb{Q}} \left[ f_2(C(y)) \right],
\end{equation}
\vspace*{-4pt}
\begin{equation}
\sup_{G: Z \to \mathcal{X}} \mathbb{E}_{x \sim \mathbb{P}}\left[ g_1(C(x)) \right] + \mathbb{E}_{z \sim \mathbb{Z}} \left[ g_2(C(G(z))) \right],
\end{equation}
where $f_1$, $f_2$, $g_1$, $g_2:\mathbb{R} \to \mathbb{R}$, $\mathbb{P}$ is the distribution of real data with support $\mathcal{X}$,  $\mathbb{Z}$ is the latent distribution (generally a multivariate normal distribution), $C(x)$ is the critic evaluated at $x$, $G(z)$ is the generator evaluated at $z$, and $G(z) \sim \mathbb{Q}$, where $\mathbb{Q}$ is the distribution of fake data. See \citet{BigGAN} for details on how different choices of $\mathbb{Z}$ performs. The critic and the generator are generally trained with stochastic gradient descent (SGD) in alternating steps.

Most GANs can be separated in two classes: non-saturating and saturating loss functions. GANs with the saturating loss are such that $g_1$=$-f_1$ and $g_2$=$-f_2$, while GANs with the non-saturating loss are such that $g_1$=$f_2$ and $g_2$=$f_1$. In this paper, we will assume that the non-saturating loss is always used as it generally works best in practice \citep{GAN} \citep{F-GAN}. Note that $g_1$ is also generally not included as its gradient with respect to $G$ is zero.

Although not always the case, the most popular GAN loss functions (SGAN, LSGAN with labels -1/1, HingeGAN, WGAN) are symmetric (i.e., $f_2(x) = f_1(-x)$). For simplicity, in this paper, we restrict ourselves to symmetric loss functions.

Non-saturating Symmetric GANs (SyGANs) can be represented more simply as:
\begin{equation}
\sup_{C: \mathcal{X} \to \mathbb{R}} \mathbb{E}_{x \sim \mathbb{P}}\left[ f(C(x)) \right] + \mathbb{E}_{y \sim \mathbb{Q}} \left[ f(-C(y)) \right],
\end{equation}
\vspace*{-4pt}
\begin{equation}
\sup_{G: Z \to \mathcal{X}} \mathbb{E}_{z \sim \mathbb{Z}} \left[ f(C(G(z))) \right],
\end{equation}
for some function $f:\mathbb{R}\to\mathbb{R}$. For easier optimization, we generally want $f$ to be concave with respect to the critic. This is the case in symmetric $f$-GANs since $f(x) = f_2(x)=-f^{*}(a(x))$, for some convex function $f^{*}$ and non-decreasing function $a(x)$, is concave.

In this paper, we restrict our relativistic divergences to symmetric cases with concave $f$. Although this may be somewhat constraining, not making these assumptions would be very problematic for GANs. By not assuming concavity, we could have an objective function that diverges to infinity (and thus an infinite divergence). This is particularly problematic for GANs because early in training, we expect $\mathbb{P}$ and $\mathbb{Q}$ to be perfectly separated (because of fully disjoint supports). This would cause the objective function to explode towards infinity and thereby causing severe instabilities. The Kullback–Leibler (KL) divergence is a good example of such a problematic divergence for GANs. If a single sample from the support of $\mathbb{Q}$ is not part of the support of $\mathbb{P}$, the divergence will be $\infty$. Also, note that the dual form of the KL divergence cannot be represented as a SyGAN with equation (3) since $f_1(x)=x$ and $f_2(x)=-e^{x-1}$ are not symmetric \citep{F-GAN}.

\subsection{Integral probability metrics}

Rather than using a concave function $f$ to ensure a maximum on the objective function, IPM-based GANs instead force the critic to respect some constraint so that it does not grow too quickly. IPM-based GANs are defined in the following way:
\begin{equation}
\underset{ \substack{C:\mathcal{X} \to \mathbb{R} \vphantom{p}  \\ C \in \mathcal{F}}}{\sup} \mathbb{E}_{x \sim \mathbb{P}}\left[ C(x) \right] - \mathbb{E}_{y \sim \mathbb{Q}} \left[C(y) \right],
\end{equation}
\vspace*{-4pt}
\begin{equation}
\sup_{G: Z \to \mathcal{X}} \mathbb{E}_{z \sim \mathbb{Z}} \left[ C(G(z)) \right],
\end{equation}
where $\mathcal{F}$ is a class of IPM. See \citet{mroueh2017sobolev} for an extensive review of the choices of $\mathcal{F}$.

\subsection{Relativistic GANs}

Rather than training the critic on real and fake data separately, this approach tries to maximize the critic's difference (CD), but not too much. In Relativistic paired GANs (RpGANs), the CD is defined as $C(x)-C(y)$, while in Relativistic average GANs (RaGANs), the CD is defined as $C(x)-\underset{y \sim \mathbb{Q}}{\mathbb{E}}C(y)$ (or vice-versa). The CD can be understood as how much more realistic real data is from fake data. The optimal size of the CD is determined by the choice of $f$. With a least-square loss, the CD must be exactly equal to 1. On the other hand,  with a log-sigmoid loss, the CD is grown to around 2 or 3 (after-which the gradient of $f$ vanishes to zero). This will be explained in more details in the next section. Again, we focus only on cases with symmetry (as done with SyGANs).

Relativistic paired GANs (RpGANs) are defined in the following way:
\begin{equation}
\sup\limits_{C:\mathcal{X} \to \mathbb{R}} \hspace{1pt}
\underset{ \substack{x \sim \mathbb{P}  \\ y \sim \mathbb{Q}}}{\mathbb{E}\vphantom{p}} \left[ f \left( C(x) - C(y) \right) \right],
\end{equation}
\vspace*{-4pt}
\begin{equation}
\sup\limits_{G: Z \to \mathcal{X}} \hspace{1pt}
\underset{ \substack{x \sim \mathbb{P}  \\ z \sim \mathbb{Z}}}{\mathbb{E}\vphantom{p}} \left[ f \left( C(G(z)) - C(x) \right) \right].
\end{equation}

Relativistic average GANs (RaGANs) are defined in the follow way:
\begin{equation}
\sup\limits_{\scriptstyle C:\mathcal{X} \to \mathbb{R}} \hspace{1pt}
\underset{\scriptstyle x \sim \mathbb{P}}{\mathbb{E}\vphantom{p}} \left[ f \left( C(x) - \underset{y \sim \mathbb{Q}}{\mathbb{E}}C(y) \right) \right] +
\underset{\scriptstyle y \sim \mathbb{Q}}{\mathbb{E}\vphantom{p}} \left[ f \left( \underset{x \sim \mathbb{P\vphantom{Q}}}{\mathbb{E}}C(x) - C(y) \right) \right],
\end{equation}
\vspace*{-4pt}
\begin{equation}
\sup\limits_{\scriptstyle G: Z \to \mathcal{X}} \hspace{1pt}
\underset{\scriptstyle z \sim \mathbb{Z}}{\mathbb{E}\vphantom{p}} \left[ f \left( C(G(z)) - \underset{x \sim \mathbb{P}}{\mathbb{E}}C(x) \right) \right] +
\underset{\scriptstyle x \sim \mathbb{P}}{\mathbb{E}\vphantom{p}} \left[ f \left( \underset{z \sim \mathbb{P\vphantom{Q}}_z}{\mathbb{E}}C(G(z)) - C(x) \right) \right].
\end{equation}

\section{Relativistic Divergences}

We define statistical divergences in the following way:

\theoremstyle{definition}
\begin{definition}\label{1.0}
	Let $\mathbb{P}$ and $\mathbb{Q}$ be probability distributions and $S$ be the set of all probability distributions with common support. A function $D:(S,S) \to \mathbb{R}_{>0} $ is a divergence if it respects the following two conditions:
	\begin{align*}
	&D(\mathbb{P}, \mathbb{Q}) \ge 0 \\ 
	&D(\mathbb{P}, \mathbb{Q}) = 0 \iff \mathbb{P} = \mathbb{Q}.
	\end{align*}
\end{definition}
In other words, divergences are distances between probability distributions. The distribution of real data ($\mathbb{P}$) is fixed and our goal is to modify the distribution of fake data ($\mathbb{Q}$) so that the divergence decreases over training time.

\subsection{Main theorem}

As discussed in the introduction, in most GANs, the objective function of the critic at optimum is a divergence. We show that the objective function of the critic in RpGANs, RaGANs, and other variants also estimate a divergence. The theorem is as follows:
\begin{theorem}
	Let $f:\mathbb{R} \to \mathbb{R}$ be a concave function such that $f(0)=0$, $f$ is differentiable at 0, $f'(0)\neq0$, $\sup_x f(x) = M > 0$, and $\argsup_x f(x) > 0$. Let $\mathbb{P}$ and $\mathbb{Q}$ be probability distributions with support $\mathcal{X}$. Let $\mathbb{M} = \frac{1}{2}\mathbb{P} + \frac{1}{2}\mathbb{Q}$. Then, we have that 
	\begin{align*}
	\mathrm{D}^{Rp}_f(\mathbb{P}, \mathbb{Q}) &= \sup\limits_{C:\mathcal{X} \to \mathbb{R}} \hspace{1pt}
	2\underset{ \substack{x \sim \mathbb{P}  \\ y \sim \mathbb{Q}}}{\mathbb{E}\vphantom{p}} \left[ f \left( C(x) - C(y) \right) \right]	\\
	\mathrm{D}^{Ra}_{f}(\mathbb{P}, \mathbb{Q}) &= \sup\limits_{\scriptstyle C:\mathcal{X} \to \mathbb{R}} \hspace{1pt}
	\underset{\scriptstyle x \sim \mathbb{P}}{\mathbb{E}\vphantom{p}} \left[ f \left( C(x) - \underset{y \sim \mathbb{Q}}{\mathbb{E}}C(y) \right) \right] +
	\underset{\scriptstyle y \sim \mathbb{Q}}{\mathbb{E}\vphantom{p}} \left[ f \left( \underset{x \sim \mathbb{P\vphantom{Q}}}{\mathbb{E}}C(x) - C(y) \right) \right] \\
	\mathrm{D}^{Ralf}_{f}(\mathbb{P}, \mathbb{Q}) &= \sup \limits_{\scriptstyle C:\mathcal{X} \to \mathbb{R}} \hspace{1pt}
	2\underset{\scriptstyle x \sim \mathbb{P}}{\mathbb{E}\vphantom{p}} \left[ f \left( C(x) - \underset{y \sim \mathbb{Q}}{\mathbb{E}}C(y) \right) \right] \\
	\mathrm{D}^{Rc}_{f}(\mathbb{P}, \mathbb{Q}) &= \sup\limits_{\scriptstyle C:\mathcal{X} \to \mathbb{R}} \hspace{1pt}
	\underset{\scriptstyle x \sim \mathbb{P}}{\mathbb{E}\vphantom{p}} \left[ f \left( C(x) - \underset{m \sim \mathbb{M}}{\mathbb{E}}C(m) \right) \right] +
	\underset{\scriptstyle y \sim \mathbb{Q}}{\mathbb{E}\vphantom{p}} \left[ f \left( \underset{m \sim \mathbb{M}}{\mathbb{E}}C(m) - C(y) \right) \right]
	\end{align*}
	are divergences.
\end{theorem}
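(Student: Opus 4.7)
The plan is to verify both axioms of Definition~\ref{1.0} for the four functionals in parallel, exploiting their shared structure. A preliminary observation I would establish first is that the stated hypotheses force $f'(0) > 0$: the tangent-line bound for concave functions gives $f(x) \le f(0) + f'(0) x = f'(0) x$, so if $f'(0)$ were negative, $f$ would be strictly negative on $(0,\infty)$, contradicting $\argsup_x f(x) > 0$ together with $\sup_x f(x) > 0$.

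Non-negativity is immediate: evaluating each supremum at the constant critic $C \equiv 0$ makes every argument of $f$ vanish, yielding the value $0$. For the forward implication $\mathbb{P} = \mathbb{Q} \Rightarrow D = 0$, I would apply Jensen's inequality termwise. For $D^{Ra}_f$, $D^{Ralf}_f$, and $D^{Rc}_f$, the relevant averages $\mathbb{E}_x C(x)$, $\mathbb{E}_y C(y)$, $\mathbb{E}_m C(m)$ all coincide under $\mathbb{P}=\mathbb{Q}$, so Jensen applied to the concave $f$ gives $\mathbb{E}[f(\cdot)] \le f(\mathbb{E}[\cdot]) = f(0) = 0$ for each summand. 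For $D^{Rp}_f$ I would instead symmetrize: relabeling $(x,y)$, legitimate since both now have the same law, gives
\[
2\mathbb{E}_{x,y}[f(C(x)-C(y))] = \mathbb{E}_{x,y}\bigl[f(C(x)-C(y)) + f(C(y)-C(x))\bigr] \le 2 f(0) = 0
\]
by midpoint concavity. Combined with the trivial lower bound, each supremum equals $0$.

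For the converse, assume $\mathbb{P} \neq \mathbb{Q}$ and choose a measurable set $A$ with, WLOG, $p := \mathbb{P}(A) > q := \mathbb{Q}(A)$. The test critic I would propose is the indicator-scaled $C_\epsilon = \epsilon\, \mathbf{1}_A$ for small $\epsilon > 0$. For $D^{Rp}_f$, direct case analysis of the four combinations of $(x \in A, y \in A)$ gives
\[
2\mathbb{E}[f(C_\epsilon(x)-C_\epsilon(y))] = 2\bigl[p(1-q) f(\epsilon) + q(1-p) f(-\epsilon)\bigr] = 2\epsilon f'(0)(p-q) + o(\epsilon),
\]
which is strictly positive for $\epsilon$ small enough since $f'(0) > 0$. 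For the centered variants, using $\mathbb{E}_x C_\epsilon = \epsilon p$, $\mathbb{E}_y C_\epsilon = \epsilon q$, and $\mathbb{E}_m C_\epsilon = \epsilon(p+q)/2$, the analogous Taylor expansions of $f$ at $0$ all produce a leading term proportional to $\epsilon f'(0)(p-q) > 0$, so each divergence is strictly positive at this $C_\epsilon$.

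The main obstacle is cosmetic rather than conceptual: the four functionals center about different averages, so the Taylor bookkeeping in the last step must be redone for each variant, and one must verify that the linear-in-$\epsilon$ coefficient does not accidentally cancel. The hypotheses were calibrated precisely so that this first-order coefficient survives; in particular, dropping either ``differentiable at $0$'' or ``$f'(0) \neq 0$'' would destroy the perturbation argument, while the condition $\argsup_x f(x) > 0$ (rather than $< 0$) is what fixes the sign of $f'(0)$.
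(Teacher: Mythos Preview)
Your proposal is correct and follows the same three-step skeleton as the paper (constant critic for non-negativity, Jensen/concavity for $\mathbb{P}=\mathbb{Q}\Rightarrow D=0$, indicator critic on a separating set for the converse), but the execution of the third step is genuinely different and more economical. The paper never isolates the fact that $f'(0)>0$; instead it builds two auxiliary lemmas---a chord-slope monotonicity statement $f(\nabla b)/b \le f(\nabla a)/a$ for $b\ge a>0$, and a L'H\^opital computation showing $\lim_{x\to 0} (-f(-x))/f(x)=1$---and then, for each of the four functionals, massages the objective at the indicator critic into the form $a f(\nabla)+b f(-\nabla)$ with $a>b$, often via a case split (e.g.\ $q<1-q$ versus $q\ge 1-q$) to match the scales of the arguments. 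Your first-order Taylor expansion at $0$ bypasses all of this: once you know $f'(0)>0$, every term linearizes uniformly and the coefficient $f'(0)(p-q)$ drops out without case analysis. What the paper's route buys is that it never writes a little-$o$ and stays entirely within elementary inequalities; what your route buys is a single calculation per variant with no auxiliary lemmas. Your symmetrization argument for $D^{Rp}_f$ in the forward direction is also slightly cleaner than the paper's iterated Jensen.
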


We ask that the supremum of $f(x)$ is reached at some positive $x$ (or at $\infty$). This is purely to ensure that a larger CD can be interpreted as leading to a larger divergence (rather than the opposite). This does not reduce the generality of Theorem 3.1. If $f(x)$ is maximized at $x < 0$, we have that $g(x) = f(-x)$ is maximized at $x > 0$ and one can simply use $g$ instead of $f$.

We require that $f$ is differentiable at zero and its derivative to be non-zero. This assumption may not be necessary, but it is needed for one of our main lemma which we use to prove that these objective functions are divergences.

A one-page sketch of the proof is available in Appendix A; the full proof is found in Appendix B. 

Note that $\mathrm{D}^{Rp}_f(\mathbb{P}, \mathbb{Q})$ corresponds to RpGANs, $\mathrm{D}^{Ra}_{f}(\mathbb{P}, \mathbb{Q})$ corresponds to RaGANs,  $\mathrm{D}^{Ralf}_{f}(\mathbb{P}, \mathbb{Q})$ corresponds to a simplified one-way version of RaGANs (RalfGANs), and $\mathrm{D}^{Rc}_{f}(\mathbb{P}, \mathbb{Q})$ corresponds to a new type of RGAN called Relativistic centered GANs (RcGANs).  RalfGANs are not particularly interesting as they simply represent a simpler version of RaGANs. On the other hand, RcGANs are interesting as they center the critic scores using the mean of the whole mini-batch (rather than the mean of only real or only fake mini-batch samples). This divergence also has similarities to the Jensen–Shannon divergence (JSD) since the JSD adds the divergence between $\mathbb{P}$ and $\mathbb{M}$ to the divergence between $\mathbb{Q}$ and $\mathbb{M}$.

A logical extension to RcGANs would be to standardize the critic scores; however, this would not lead to a divergence given that we could not control the size of the elements inside $f$. To make it a divergence, we would need a learn-able scaling weight (as in batch norm \citep{BatchNorm}), but this would counter the effect of the standardization. Thus, standardizing and scaling would just correspond to an equivalent re-parametrization of $\mathrm{D}^{Rc}_{f}$.

\subsection{Subtypes of divergences}

\begin{figure}[!ht]
	\centering
	\includegraphics[scale=0.7]{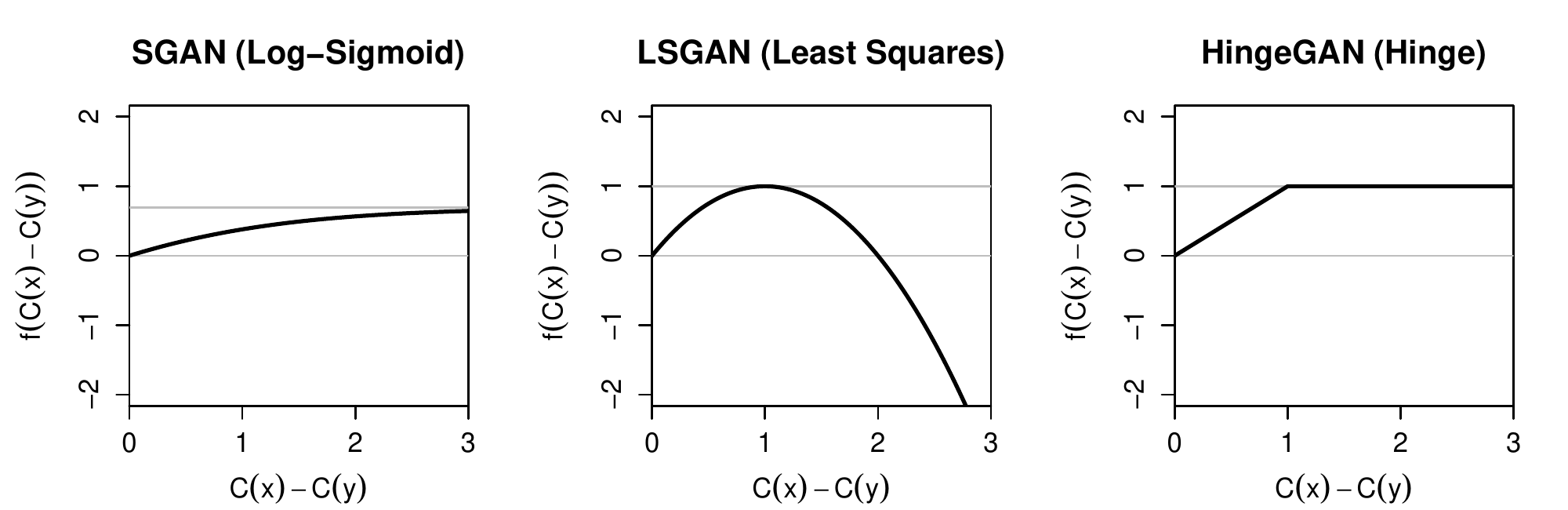}
	\caption{Plot of $f$ with respect to the critic's difference (CD) using three appropriate choices of $f$ for relativistic divergences. The bottom gray line represents $f(0)=0$; the divergence is zero if all CDs are zero. The above gray line represents the maximum of $f$; the divergence is maximized if all CDs leads to that maximum.}
\end{figure}

Figure 1 shows three examples of concave $f$ with the necessary properties to be used in relativistic divergences; they are the concave functions used in SGAN, LSGAN (with labels 1/-1), and HingeGAN. Their respective mathematical functions are 
\begin{align}
f_{S}(z) &= \log(\sigmoid(z))+\log(2), \\
f_{LS}(z) &= -(z-1)^2 + 1, \\ 
f_{Hinge}(z) &= - \max(0,1-z)+1 .
\end{align}

 Interestingly, we see that they form three different types of functions. Firstly, we have functions that grow exponentially less as $x$ increases and thus reach their supremum at $\infty$. Secondly, we have functions that grow to a maximum and then forever decrease (thus penalizing large CDs). Thirdly, we have functions that grow to a maximum and then never change. SGAN is of the first type, LSGAN is of the second, and HingeGAN is of the third type.

This shows that for all three types, we have that the CD is only encouraged to grow until a certain point. With the first type, we never truly force the CD to stop growing, but the gradients vanish to zero. Thus, SGD effectively prevents the CDs from growing above a certain level (sigmoid saturates at around 2 or 3).

It is useful to keep in mind that Figure 1 also represents the concave functions used for SyGANs, in which case $f$ applies to real and fake data separately ($f(x)$ and $f(-y)$). 

\subsection{Weakness of the divergence}

The paper by \citet{WGAN} on using the Wasserstein distance (and other IPMs) for GANs has been extremely influential. In the WGAN paper, the authors suggest that the Wasserstein distance is more appropriate than $f$-divergences for training a critic since it induces the weakest topology possible. Rather than giving a formal definition in terms of topologies, we show a simpler definition (as also done by \citet{WGAN}):

\theoremstyle{definition}
\begin{definition}\label{1.0}
	Let $\mathbb{P}$ be a probability distribution with support $\mathcal{X}$, $\left( \mathbb{P}_n\right)_{n \in \mathbb{N}}$ be a sequence of distributions converging to $\mathbb{P}$, and $D_1$ and $D_2$ be statistical divergences (per definition 3.1).
	
	We say that $D_1$ is weaker than $D_2$ if we have that:
	$$D_2(\mathbb{P}_n, \mathbb{P})\to 0 \implies D_1(\mathbb{P}_n, \mathbb{P})\to 0,$$
	but the converse is not true.
	
	We say that $D_1$ is a weakest distance if we have that:
	$$D_1(\mathbb{P}_n, \mathbb{P})\to 0 \iff \mathbb{P}_n \overset{D}{\to} \mathbb{P},$$
	where $\overset{D}{\to}$ represents convergence in distribution.
\end{definition}

Thus, intuitively, a weaker divergence can be thought as converging more easily. \citet{WGAN} showed that the Wasserstein distance is a weakest divergence and that it is weaker than common $f$-divergences (as used in $f$-GANs and standard GANs). They also showed that the Wasserstein distance is continuous with respect to its parameters and they attributed this property to the weakness of the divergence.

Considering this argument, one would except that RaGANs would be weaker than RpGANs which would be weaker than RGANs since  this is the order of their relative performance and stability. Instead, we found the opposite relationship:
\begin{theorem}
	Let $\mathbb{P}$ be a probability distribution with support $S$, $\left( \mathbb{P}_n\right)_{n \in \mathbb{N}}$ be a sequence of distributions converging to $\mathbb{P}$, $f:\mathbb{R} \to \mathbb{R}$ be a concave function such that $f(0)=0$, $f$ is differentiable at 0, $f'(0)\neq0$, $\sup_x f(x) = M > 0$, and $\argsup_x f(x) > 0$. Then, we have that 
	\begin{align*}
	&\mathrm{D}^{W}_f(\mathbb{P}, \mathbb{Q}) \text{ is weakest,} \\
	&\mathrm{D}^{W}_f(\mathbb{P}, \mathbb{Q}) \text{ is weaker than } \mathrm{D}^{Sy}_f(\mathbb{P}, \mathbb{Q}), \\ 
	&\mathrm{D}^{Sy}_f(\mathbb{P}, \mathbb{Q}) \text{ is weaker than } \mathrm{D}^{Rp}_f(\mathbb{P}, \mathbb{Q}), \\
	&\mathrm{D}^{Rp}_f(\mathbb{P}, \mathbb{Q}) \text{ is weaker than } \mathrm{D}^{Ra}_f(\mathbb{P}, \mathbb{Q}),
	\end{align*}
	were $\mathrm{D}^{W}$ is the Wasserstein distance and $\mathrm{D}^{Sy}$ is the distance in Symmetric GANs (see equation 3).
\end{theorem}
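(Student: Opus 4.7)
The plan is to reduce the first two assertions to results from \citet{WGAN} and prove the remaining two via pointwise inequalities $D^{Sy}_f \leq D^{Rp}_f \leq D^{Ra}_f$ between the corresponding critic-value functionals. Since these inequalities are uniform over $(\mathbb{P},\mathbb{Q})$, the convergence-to-zero implications in Definition 3.2 follow immediately; the strictness clauses of ``weaker than'' then require explicit separating sequences.

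For $D^{W}$ being weakest and weaker than $D^{Sy}_f$, I would quote the corresponding statements from the WGAN paper, using that for the admissible $f$ here $D^{Sy}_f$ coincides with the dual representation of a standard $f$-divergence. The classical example $\mathbb{P}_n = \delta_{1/n}$, $\mathbb{P} = \delta_0$ realizes strict weakness at this step, since $D^W \to 0$ while $D^{Sy}_f$ sits at its maximum $2M$ by taking a critic that perfectly separates the two point masses.

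The inequality $D^{Sy}_f \leq D^{Rp}_f$ is obtained from midpoint concavity of $f$: for any critic $C$ and any $x,y$,
\[
f(C(x)) + f(-C(y)) \leq 2 f\!\left(\tfrac{C(x) - C(y)}{2}\right).
\]
Integrating over $x\sim\mathbb{P}$, $y\sim\mathbb{Q}$, the right-hand side is exactly the Rp-objective evaluated at the rescaled critic $C/2$, which is admissible in the Rp supremum, so passing to the supremum on the left yields the bound. The inequality $D^{Rp}_f \leq D^{Ra}_f$ comes from two applications of Jensen: concavity of $f$ gives $\mathbb{E}_Y[f(C(X)-C(Y))] \leq f(C(X) - \mathbb{E}_Y C(Y))$ pointwise in $X$, and symmetrically with $X$ and $Y$ swapped; integrating and summing produces $2\mathbb{E}_{X,Y}[f(C(X)-C(Y))]$ on the left and exactly the Ra-objective on the right, and taking suprema finishes.

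The main obstacle is the strict-weakness clauses between $D^{Sy}_f, D^{Rp}_f$, and $D^{Ra}_f$. Each pointwise inequality above is strict whenever the relevant Jensen or midpoint-concavity step is strict, but the suprema over $C$ could in principle still coincide. To close the gap I would construct explicit pairs $(\mathbb{P}_n,\mathbb{P})$ along which one side tends to zero and the other does not: a natural candidate for Ra versus Rp is a family of distributions whose expectations $\mathbb{E}C$ match for every candidate critic but whose joint laws differ in higher moments, so that the centering in the Ra objective masks the discrepancy; for Rp versus Sy one can exploit that the bound only evaluates Rp at the rescaled critic $C/2$, leaving room for sequences whose Sy-optimal critic must grow unboundedly to maintain the gap. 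These case-by-case separating constructions, rather than the inequalities themselves, are the technical heart of the argument.
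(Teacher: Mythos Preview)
Your core argument for the two pointwise inequalities is essentially identical to the paper's Appendix~C: midpoint concavity plus the rescaled critic $C/2$ gives $\mathrm{D}^{Sy}_f \le \mathrm{D}^{Rp}_f$, and conditional Jensen applied in each coordinate gives $\mathrm{D}^{Rp}_f \le \mathrm{D}^{Ra}_f$. The paper also does not prove the Wasserstein statements itself, so deferring those to \citet{WGAN} matches the paper as well.

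Where you go beyond the paper is in flagging the strictness clause in Definition~3.2 (``but the converse is not true''). The paper's proof only establishes the pointwise inequalities and never exhibits separating sequences, so it does not actually verify this part of its own definition; you are right to identify it as the genuine technical burden. That said, your sketched constructions do not quite point in the right direction. For $\mathrm{D}^{Rp}_f$ weaker than $\mathrm{D}^{Ra}_f$ you need a sequence with $\mathrm{D}^{Rp}_f(\mathbb{P}_n,\mathbb{P}) \to 0$ while $\mathrm{D}^{Ra}_f(\mathbb{P}_n,\mathbb{P}) \not\to 0$, yet your description (``centering in the Ra objective masks the discrepancy'') suggests the opposite orientation; and the condition ``$\mathbb{E}C$ match for every candidate critic'' forces $\mathbb{P}_n = \mathbb{P}$, which is too strong. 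Likewise, for $\mathrm{D}^{Sy}_f$ versus $\mathrm{D}^{Rp}_f$ the observation that the bound only tests the rescaled critic $C/2$ does not by itself yield a separating sequence, since the Rp supremum ranges over all critics anyway. These sketches would need substantial reworking to close the gap---but to reiterate, the paper leaves this gap entirely open.
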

The proof is in Appendix C.

Given the good performance of RaGANs, this suggests that the argument made by \citet{WGAN} is insufficient. It only focuses on a perfect sequence of converging distributions, but the generator training does not guarantee a converging sequence of fake data distributions. It ignores the complex dynamics and intricacies of the generator training, which are still not well understood. Furthermore, it assumes an optimal critic which is unobtainable in practice. In practice, trying to obtain a semi-optimal critic requires many iterations and thus a significant amount of additional computational resources.

As previously suggested \citep{RGAN}, what makes the Wasserstein distance a good choice of divergence are likely 1) the constraint of the critic (a Lipschitz critic) and 2) the use of a relativistic discriminator, rather than the weakness of the divergence.

\section{Estimators}

\subsection{RpGANs}

To estimate RpGANs, \citet{jolicoeur2018beyonddivergence} used the following estimator\footnote{Note that they actually used $\frac{1}{k}$ instead of $\frac{2}{k}$ because of how they defined the divergence.}:
 $$\mathrm{\widehat{D}^{Rp}}_f(\mathbb{P},\mathbb{Q}) = \sup\limits_{\scriptstyle C:\mathcal{X} \to \mathbb{R}} \hspace{1pt}
\frac{2}{k} \sum_{i=1}^{k} \left[ f(C(x_i)-C(y_i)) \right],$$
where $x_1, \ldots , x_k$ and $y_1, \ldots , y_k$ are samples from $\mathbb{P}$ and $\mathbb{Q}$ respectively.

Although this is an unbiased estimator of $\mathrm{D}^{Rp}_{f}(\mathbb{P}, \mathbb{Q})$, it is not the estimator with the minimal variance for a given mini-batch. Using the two-sample version \citep{lehmann1951consistency} of the U-statistic theorem \citep{hoeffding1992class} and given that the loss function is symmetric with respect to its arguments, one can show the following:
\begin{corollary}
	Let $\mathbb{P}$ and $\mathbb{Q}$ be probability distributions with support $\mathcal{X}$. Let $x_1, \ldots , x_k$ and $y_1, \ldots , y_k$ be i.i.d. samples from $\mathbb{P}$ and $\mathbb{Q}$ respectively. Then, we have that 
	\begin{align*}
	\mathrm{\widehat{D}^{Rp*}}_f(\mathbb{P},\mathbb{Q}) &= \sup\limits_{\scriptstyle C:\mathcal{X} \to \mathbb{R}} \hspace{1pt}
	\frac{2}{k^2} \sum_{i=1}^{k} \sum_{j=1}^{k} \left[ f(C(x_i)-C(y_j)) \right]
	\end{align*}
	is the minimum-variance unbiased estimator (MVUE) of $\mathrm{D}^{Rp}_{f}(\mathbb{P}, \mathbb{Q})$.
\end{corollary}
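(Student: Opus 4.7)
The approach is to cast the problem as a direct application of the two-sample U-statistic theorem that the corollary already invokes. Fix an arbitrary critic $C$ and define the kernel
\[
h(x,y) = 2\, f\bigl(C(x) - C(y)\bigr),
\]
of degree $(1,1)$, so that the estimand inside the supremum is $\theta(C) = \mathbb{E}_{x \sim \mathbb{P},\, y \sim \mathbb{Q}}[h(x,y)]$. Since only one argument is drawn from each sample, the requirement that the kernel be symmetric under permutations within each sample is vacuous, so any concave (or indeed any measurable) $f$ of the critic difference is admissible.

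First I would verify unbiasedness. Each summand $f(C(x_i) - C(y_j))$ has expectation $\mathbb{E}[f(C(x)-C(y))]$ by independence of the two samples; summing $k^2$ such terms and multiplying by $2/k^2$ yields $\theta(C)$. Next I would argue the MVUE claim via the Lehmann--Scheff\'e theorem. The pair of order statistics $(x_{(\cdot)}, y_{(\cdot)})$ is a complete sufficient statistic for the nonparametric family of all pairs of distributions on $\mathcal{X}$, which is the classical completeness hypothesis underlying U-statistic optimality in \citet{lehmann1951consistency} and \citet{hoeffding1992class}. The estimator $\widehat{D}^{Rp*}_f$ is invariant under separate permutations of $\{x_i\}$ and $\{y_j\}$, so it is a function of this sufficient statistic; equivalently, it coincides with the Rao--Blackwellization of the naive estimator $\frac{2}{k}\sum_i f(C(x_i)-C(y_i))$, because averaging that estimator over all $k!\cdot k!$ permutations of the two samples collapses to averaging over the $k^2$ index pairs of the double sum. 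Lehmann--Scheff\'e then identifies $\widehat{D}^{Rp*}_f$ as the unique UMVUE of $\theta(C)$.

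Finally, since this identification holds for every fixed $C$, taking the supremum over $C$ on both sides produces the claimed estimator of $D^{Rp}_f(\mathbb{P}, \mathbb{Q})$. The main technical point is the completeness of the order statistics for the unrestricted nonparametric family, which is what powers U-statistic optimality; it holds because no parametric restriction is placed on $\mathbb{P}$ and $\mathbb{Q}$ beyond having common support. A secondary subtlety is that in practice $C$ is itself learned from the very samples used to form the estimator, whereas the UMVUE analysis treats $C$ as a fixed measurable function; this separation between estimating the inner expectation and optimizing the outer supremum is consistent with how the supremum is handled throughout the paper and does not affect the variance-comparison argument.
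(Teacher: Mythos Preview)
Your proposal is correct and follows essentially the same route as the paper: the paper does not give a detailed proof of this corollary at all, but simply states that it follows from the two-sample version of the U-statistic theorem of \citet{lehmann1951consistency} and \citet{hoeffding1992class}, together with a remark about symmetry of the loss in its arguments. You have fleshed out exactly that argument --- fixing $C$, recognizing $h(x,y)=2f(C(x)-C(y))$ as a degree-$(1,1)$ two-sample kernel, invoking completeness of the order statistics and Lehmann--Scheff\'e, and then passing to the supremum --- which is the standard derivation those references encode. Your observation that the within-sample symmetry condition is vacuous for a $(1,1)$ kernel is correct and in fact sharpens the paper's informal appeal to symmetry; likewise your caveat that the UMVUE analysis treats $C$ as fixed while in practice it is learned from the same data is a valid qualification that the paper leaves implicit.
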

Although it is the MVUE, this estimator requires $O(k^2)$ operations instead of $O(k)$. In the experiments, we will show that using this estimator does not lead to better results. Given the quadratic scaling and lack of performance gain, it may not be worth using.

\subsection{RaGANs and RalfGANs}

The divergences of RaGANs and RalfGANs assume that one knows the true expectation of the critic of real and fake data. However, in practice, we can only estimate the expectation. Although never explicitly mentioned, \cite{RGAN} simply replaced all expectations by the mini-batch mean: $$\mathbb{E}\left[ C(x) \right] \approx \frac{1}{k}\sum_{i=1}^{k} C(x_i), $$ where $k$ is the size of the mini-batch. 

Given the non-linear function applied after calculating the CD, the divergences of RaGANs are biased with finite batch size $k$. This means that RaGANs are only asymptotically unbiased. How large $k$ must be for the bias to become negligible is unclear. 

We attempted to find a close form for the bias with $f_S$, $f_{LS}$, and $f_{Hinge}$ (equations 11, 12, 13 and Figure 1). We were only able to find a closed form for the bias with $f_{LS}$. The bias with $f_{LS}$ has a simple form and can thus be removed, as seen below:
\begin{corollary}
	Let $\mathbb{P}$ and $\mathbb{Q}$ be probability distributions with support $\mathcal{X}$. Then, we have that 
	\begin{align*}
	&\sup\limits_{\scriptstyle C:\mathcal{X} \to \mathbb{R}} \hspace{1pt}
	\frac{1}{k} \left(\hat{\sigma}_{C(x)} + \hat{\sigma}_{C(y)} - \sum_{i=1}^{k} \left[ \left( C(x_i) -  \hat{\mu}_{C(y)} - 1  \right)^2 \right] -
	\sum_{j=1}^{k} \left[ \left( \hat{\mu}_{C(x)} - C(y_j) - 1  \right)^2 \right] \right)+ 2, \\ 
	&\sup \limits_{\scriptstyle C:\mathcal{X} \to \mathbb{R}} \hspace{1pt}
	\frac{2}{k} \left( \hat{\sigma}_{C(y)} - \sum_{i=1}^{k} \left[ \left( C(x_i) -  \hat{\mu}_{C(y)} - 1 \right)^2 \right] \right) + 1, \\
	&\sup\limits_{\scriptstyle C:\mathcal{X} \to \mathbb{R}} \hspace{1pt}
	-\frac{1}{k} \left(\frac{1}{2}\hat{\sigma}_{C(x)} + \frac{1}{2}\hat{\sigma}_{C(y)} + \sum_{i=1}^{k} \left[ \left( C(x_i) -  \hat{\mu}_{C} - 1  \right)^2 \right] +
	\sum_{j=1}^{k} \left[ \left( \hat{\mu}_{C} - C(y_j) - 1  \right)^2 \right] \right) + 2
	\end{align*}
	are unbiased estimator of $\mathrm{D}^{Ra}_{f_{LS}}(\mathbb{P}, \mathbb{Q})$, $\mathrm{D}^{Ralf}_{f_{LS}}(\mathbb{P}, \mathbb{Q})$, and $\mathrm{D}^{Rc}_{f_{LS}}(\mathbb{P}, \mathbb{Q})$ respectively,\\
	where $\hat{\mu}_{C(x)} = \frac{1}{k} \sum_{i=1}^{k} C(x_i)$, $\hat{\mu}_{C(y)} = \frac{1}{k} \sum_{i=1}^{k} C(y_i)$,  $\hat{\mu}_{C} = \frac{1}{k} \sum_{i=1}^{k} \left(\frac{C(x_i)+C(y_i)}{2}\right)$, $\hat{\sigma}_{C(x)} = \frac{1}{(k-1)}\sum_{i=1}^{k} \left( C(x_i) - \hat{\mu}_{C(x)}\right)^2$. and $\hat{\sigma}_{C(y)} = \frac{1}{(k-1)}\sum_{i=1}^{k} \left( C(y_i) - \hat{\mu}_{C(y)}\right)^2$.
\end{corollary}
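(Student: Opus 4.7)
The plan is to exploit that $f_{LS}(z) = -(z-1)^2 + 1$ is a pure quadratic, so each divergence reduces to a linear combination of second moments of shifted critic differences, and the naive plug-in estimator (obtained by replacing each $\mathbb{E}\,C$ by its mini-batch mean) differs from the divergence only by terms of the form $\mathrm{Var}(\text{sample mean})$. First I would fix an arbitrary critic $C$ and rewrite the integrand of each divergence purely in terms of $\mu_X := \mathbb{E}_x C(x)$, $\mu_Y := \mathbb{E}_y C(y)$, $\sigma_X^2 := \mathrm{Var}_x C(x)$, and $\sigma_Y^2 := \mathrm{Var}_y C(y)$. Then, for each plug-in, I would apply the identity $\mathbb{E}[W^2] = (\mathbb{E}\,W)^2 + \mathrm{Var}(W)$ to the quadratic terms to expose the exact bias. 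Unbiasedness is established pointwise in $C$ before taking the supremum, which is what the statement requires.

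For $\mathrm{D}^{Ra}_{f_{LS}}$ and $\mathrm{D}^{Ralf}_{f_{LS}}$ the key decomposition is
\[
C(x_i) - \hat{\mu}_{C(y)} - 1 \;=\; (C(x_i) - \mu_Y - 1) \;-\; (\hat{\mu}_{C(y)} - \mu_Y).
\]
Since the $x_i$ and the $y_j$ are drawn independently, the two summands are uncorrelated, the cross term vanishes in expectation, and one obtains $\mathbb{E}[(C(x_i)-\hat{\mu}_{C(y)}-1)^2] = \mathbb{E}[(C(x)-\mu_Y-1)^2] + \sigma_Y^2/k$, with the analogous statement (swap $X \leftrightarrow Y$) for the symmetric term of $\mathrm{D}^{Ra}$. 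Because $\hat{\sigma}_{C(x)}$ and $\hat{\sigma}_{C(y)}$ use Bessel's correction and hence unbiasedly estimate $\sigma_X^2$ and $\sigma_Y^2$, adding $\hat{\sigma}_{C(x)}/k$ and $\hat{\sigma}_{C(y)}/k$ (with the appropriate multiplicities) to the plug-in exactly cancels the bias and yields the stated formulas.

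The main obstacle will be $\mathrm{D}^{Rc}_{f_{LS}}$, because $\hat{\mu}_C = \frac{1}{k}\sum_i (C(x_i)+C(y_i))/2$ shares samples with the $C(x_i)$ and $C(y_j)$ appearing inside the quadratic, so the independence argument above fails. I would instead compute directly $\mathrm{Var}(C(x_i)-\hat{\mu}_C) = \sigma_X^2 - 2\,\mathrm{Cov}(C(x_i),\hat{\mu}_C) + \mathrm{Var}(\hat{\mu}_C)$, using $\mathrm{Cov}(C(x_i),\hat{\mu}_C) = \sigma_X^2/(2k)$ (only the $i$-th summand of $\hat{\mu}_C$ contributes) and $\mathrm{Var}(\hat{\mu}_C) = (\sigma_X^2+\sigma_Y^2)/(4k)$; the analogous identity holds on the $y$-side. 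Summing the contributions from both sides, the coefficient of $\sigma_X^2$ collapses to $1/(2k)$ and likewise for $\sigma_Y^2$, which explains the factor $1/2$ multiplying $\hat{\sigma}_{C(x)}$ and $\hat{\sigma}_{C(y)}$ in the third estimator. A routine bookkeeping step then verifies that subtracting this correction from the plug-in reproduces the stated formula term for term.
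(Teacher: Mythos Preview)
Your proposal is correct and follows the same overall strategy as the paper: fix $C$, compute the bias of the naive plug-in estimator of each inner objective in closed form, and cancel it with the Bessel-corrected sample variances $\hat\sigma_{C(x)},\hat\sigma_{C(y)}$. The paper's appendix obtains the same biases ($-\sigma_y^2/k$ and $-\sigma_x^2/k$ for the two terms of $\mathrm{D}^{Ra}$, and $+\tfrac{1}{2k}(\sigma_x^2+\sigma_y^2)$ for $\mathrm{D}^{Rc}$) by brute-force polynomial expansion of $(C(x_i)-\hat\mu-1)^2$ and term-by-term expectation; your route via the decomposition $C(x_i)-\hat\mu_{C(y)}-1=(C(x_i)-\mu_Y-1)-(\hat\mu_{C(y)}-\mu_Y)$ together with $\mathbb{E}[W^2]=(\mathbb{E}W)^2+\mathrm{Var}(W)$ is a cleaner reorganisation of the same computation, and your covariance bookkeeping for $\mathrm{D}^{Rc}$ (where $\hat\mu_C$ shares samples with both sides) reproduces exactly the paper's $\tfrac{3}{4k}\sigma_x^2-\tfrac{1}{4k}\sigma_y^2$ per-side bias before summation. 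Nothing is missing; the two arguments differ only in how the quadratic algebra is arranged.
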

See Appendix B for the proof. This means that we can estimate the loss function in RaLSGAN, RalfLSGAN, and RcLSGAN without bias. In the experiments, we will show that the bias is negligible with the usual choices of $f$ (equations 11, 12, 13) and batch size (32 or higher).

\section{Experiments}

All experiments were done with the spectral GAN architecture for 32x32 images (See \cite{miyato2018spectral}) in Pytorch \citep{pytorch}. We used the standard hyperparameters: learning rate (lr) = .0002, batch size (k) = 32, and the ADAM optimizer \citep{Adam} with parameters $(\alpha_1, \alpha_2)= (.50, .999)$. We trained the models for 100k iterations with one critic update per generator update. For the datasets, we used CIFAR-10 \citep{krizhevsky2009learning}, CelebA \citep{liu2015faceattributes} and CAT \citep{cat}. All models were trained using the same seed. To evaluate the quality of generated outputs, we used the Fréchet Inception Distance (FID) \citep{heusel2017gans}. For a review of the different evaluation metrics for GANs, please see \citet{borji2018pros}.

\subsection{Bias}

We approximated the bias of RaGANs and RcGANs by estimating the real/fake critic mean from $320$ samples rather than the $32$ mini-batch samples. For $f_{LS}$, we were able to calculate the true value of the bias (in expectation, see Corollary 4.2). Results on CIFAR-10 are shown in Figure 2. 

\begin{figure}[!ht]
	\centering
	\includegraphics[scale=0.5]{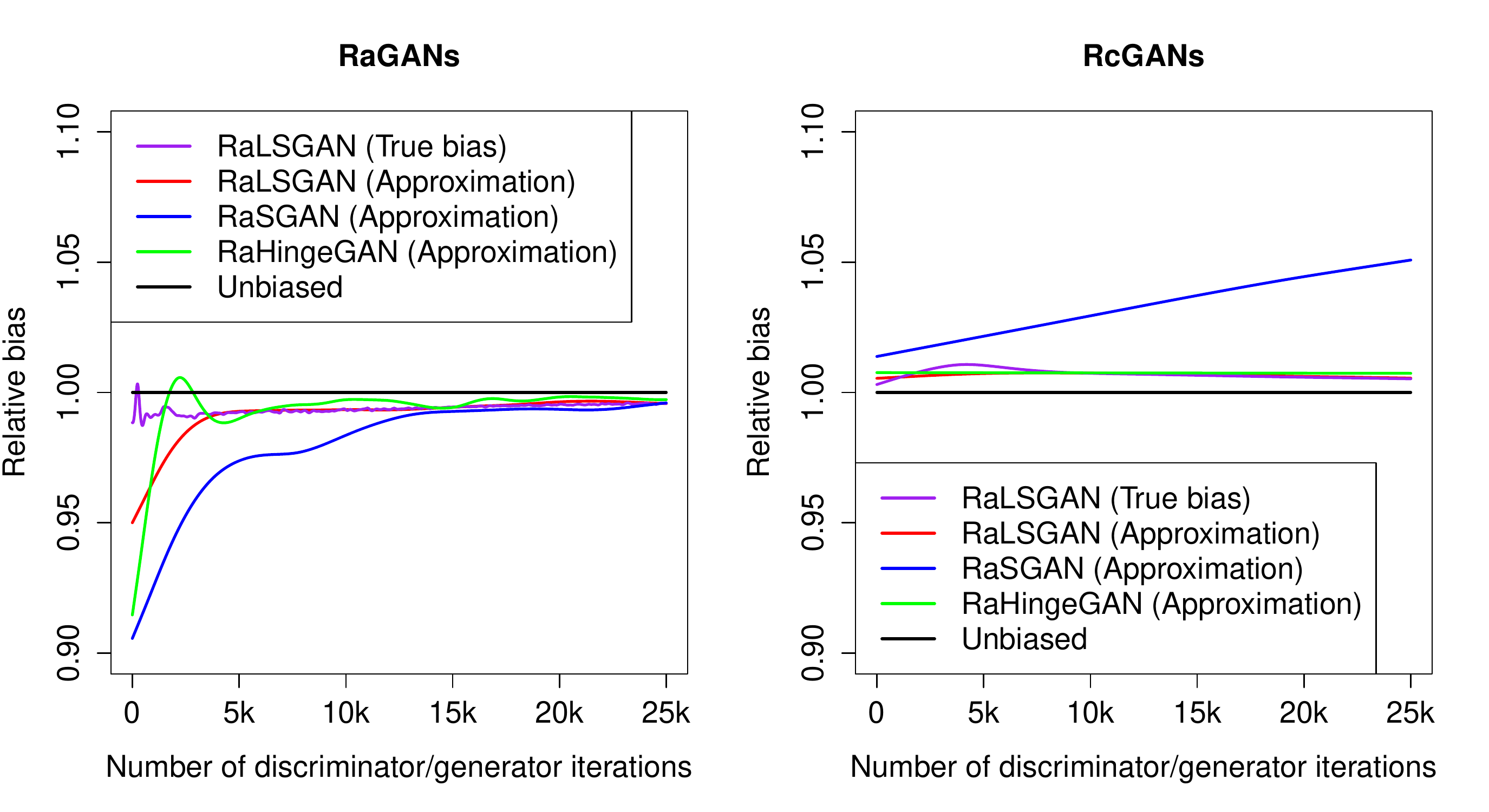}
	\caption{Plots of the relative bias (i.e., the biased estimate divided by the unbiased estimate) of relativistic average and centered $f$-divergences estimators over training time on CIFAR-10 with a mini-batch size of 32. Approximations of the bias were made using 320 independent samples.}
\end{figure}

For RAGANs, the approximation of the relative bias with $f_{LS}$ was correct from 4k iterations and onwards. For all choices of $f$, we observed the same pattern of low approximated relative bias that stabilized to a larger number after a certain number of iterations. We suspect that this may be due to the important instabilities of the first iterations when the discriminator is not optimal. At 15k iterations, all biases were stabilized. We calculated the average of the bias with different $f$ starting at 15k iterations: .995 for the true relative bias with $f_{LS}$, .996 for the approximated relative bias with $f_{LS}$, .994 for the approximated relative bias with $f_{S}$, and .997 for the approximated relative bias with $f_{Hinge}$.

For RcGANs, the approximation of the bias with $f_{LS}$ was correct from the very beginning of training. All biases were relatively stable over time with the exception of $f_{S}$ which increased linearly over time (up to around 1.05). We calculated the average of the bias with different $f$: 1.007 for the true relative bias with $f_{LS}$, 1.007 for the approximated relative bias with $f_{LS}$, 1.03 for the approximated relative bias with $f_{S}$, and 1.007 for the approximated relative bias with $f_{Hinge}$.

Overall, this shows that the bias in the estimators of RaGANs and RcGANs tends to be small. Furthermore, with the exception of $f_S$, the bias is relatively stable over time. Thus, accounting for the bias, may not be necessary.

\subsection{Divergences}

To test the new relativistic divergences proposed (and verify whether removing the bias in RaGANs is useful), we ran experiments on CIFAR-10 using $f_{LS}$, on LSUN bedrooms using $f_{Hinge}$, and on CAT using $f_{Hinge}$ (these choices of $f$ were arbitrary). Results are shown in Table 1.

\begin{table}[!ht]
	\caption{Minimum (and standard deviation) of the FID calculated at 10k, 20k, ... , 100k iterations using different loss functions (see equations 11, 12, 13) and datasets.}
	\label{CIFAR10}
	\centering
	\begin{tabular}{llll}
		\toprule
		& CIFAR-10 & CelebA & CAT \\
		Loss & $f_{LS}$ & $f_{Hinge}$ & $f_{S}$ \\
		\cmidrule(){1-4}
		GAN & 31.1 (8.2) & 15.3 (51.8) & 15.2 (11.1) \\
		RpGAN & 31.5 (7.6) & 16.7 (4) & 12.9 (2.3) \\
		RpGAN (MVUE) & 30.2 (11.7) & 21.9 (3.2) & 18.2 (2.9) \\
		RaGAN & {\fontseries{b}\selectfont 29.2} ({\fontseries{b}\selectfont 7.4}) & {\fontseries{b}\selectfont 15.9} (4.5)  & {\fontseries{b}\selectfont 12.3} ({\fontseries{b}\selectfont 1.2}) \\
		RaGAN (unbiased) & 30.3 (12.9) & - & -  \\
		RcGAN & 31.7 (8) & 18.1 ({\fontseries{b}\selectfont 2.9})  & 16.5 (7.1) \\
		RcGAN (unbiased) & 32.3 (8.7) & - & -  \\
		\bottomrule
	\end{tabular}
\end{table}

Using the MVUE for RpGAN resulted in the generator having a worse performance on CIFAR-10 with  $f_{LS}$ ($\beta=.37$, $p=.72$), CelebA with $f_{Hinge}$ ($\beta=2.08$, $p=.07$), and CAT with $f_{S}$ ($\beta=4.02$, $p=.003$). Similarly, using the unbiased estimator made the generator perform sightly worse for RaLSGAN ($\beta=2.37$, $p=.04$) and RcLSGAN ($\beta=1.33$, $p=.05$). These results are surprising as they suggest that using noisy or slightly biased estimators may be beneficial.

\section{Conclusion}

Most importantly, we proved that the objective function of the critic in RGANs is a divergence. 

In addition, we showed that $f$-divergences are weaker than relativistic $f$-divergences. Thus, the weakness of the topology induced by a divergence alone cannot explain why WGAN performs well.

Finally, we took a closer look at the estimators or RGANs and found that 1) the estimator of RpGANs used by \citet{RGAN} is not the minimum-variance unbiased estimator (MVUE) and 2) the estimators of RaGANs and RalfGANs are slightly biased with finite batch-sizes. Surprisingly, we found that neither using the MVUE with RpGANs or using an unbiased estimator with RaGANs and RalfGANs improved the performance. On the contrary, using better estimators always slightly decreased the quality of generated samples. This suggests that using noisy estimates of the divergences may beneficial as a regularization mechanism. This could be explained by vanishing gradients when the discriminator becomes closer to optimality \citep{GANTheorems}.

It still remains a mystery as to why RaGANs are better than RpGANs and the direct mechanism that leads to RGANs performing in a much more stable matter. Future work should attempt to better understand the effect of the critic's difference on training. Our experiments were limited to the generation of small images; thus, we encourage further experiments with the MVUE and the unbiased estimator of RaLSGAN in different settings.

\bibliographystyle{unsrtnat}

\appendix
\section{Appendices}
\addcontentsline{toc}{section}{Appendices}
\renewcommand{\thesubsection}{\Alph{subsection}}

\subsection{Sketch of the divergences proofs}

Although the four divergences have separate proofs, a similar framework is used in each proof. Each proof consists of three steps. For clarity of notation, let $D_f(\mathbb{P}, \mathbb{Q}) = \sup\limits_{C:\mathcal{X} \to \mathbb{R}} F(\mathbb{P}, \mathbb{Q}, C, f)$ be the divergence, where $F$ is any of the objective functions in Theorem 3.1.

First, we show that $D_f(\mathbb{P}, \mathbb{Q}) \ge 0$. This is easily proven by taking the simplest possible choice of critic, which does not depends on the probability distributions, i.e., $C^{w}(x)=k$ for all $x$. This critic always leads to $f(0)$ and thus to a objective function equal to 0. This means that \\ $D_f(\mathbb{P}, \mathbb{Q}) = \sup\limits_{C:\mathcal{X} \to \mathbb{R}} F(\mathbb{P}, \mathbb{Q}, C, f) \geq F(\mathbb{P}, \mathbb{Q}, C^{w}, f) = 0$.

Second, we show that $\mathbb{P} = \mathbb{Q} \implies D_f(\mathbb{P}, \mathbb{Q}) = 0$. This step generally relies on Jensen's inequality (for concave functions) which we use to show that $D_f(\mathbb{P}, \mathbb{P}) \leq 0$. Given that $D_f(\mathbb{P}, \mathbb{P}) \geq 0$ and $D_f(\mathbb{P}, \mathbb{P}) \leq 0$, we have that $D_f(\mathbb{P}, \mathbb{P}) = 0$.

Third, we show that $D_f(\mathbb{P}, \mathbb{Q}) = 0 \implies \mathbb{P} = \mathbb{Q}$. This step is by far the most difficult to prove. Instead of showing it directly, we instead prove it by contraposition, i.e., we show that $\mathbb{P} \neq \mathbb{Q} \implies  D_f(\mathbb{P}, \mathbb{Q}) > 0$. To prove this, we use the fact that if $\mathbb{P} \neq \mathbb{Q}$, there must be values of the probability density functions, $p(x)$ and $q(x)$ respectively, such that $p(x) > q(x)$ (and vice versa). Let $T=\{x|p(x)>q(x)\}$, we know that this set is not empty. To make the proof as simple as possible, we use the following sub-optimal critic:
$$C'(x) = \begin{cases}
\nabla & \text{if $x \in T$}\\
0 & \text{else},
\end{cases}$$
where $\nabla \neq 0$. This critic function is very simple, but, as we will show, there exists a $\nabla>0$ such that this leads to an objective function greater than 0 which means that the divergence is also greater than 0.

With this critic in mind, our goal is to transform the problem into the following: $$ D_f(\mathbb{P}, \mathbb{Q}) = \sup\limits_{C:\mathcal{X} \to \mathbb{R}} F(\mathbb{P}, \mathbb{Q}, C, f) \geq F(\mathbb{P}, \mathbb{Q}, C', f) \geq L(\nabla) > 0,$$ where $L(\nabla) = a f(\nabla) + b f(-\nabla)$, for some $a>0$ and $b>0$ s.t. $a>b$. We have been able to show this with all divergences.

We want to find a $\nabla > 0$ large enough so that the positive term ($f(\nabla)$) is big, but small enough so that the negative term ($f(-\nabla)$) is not too big. The main caveat is that, by concavity, $f(\nabla) \leq |f(-\nabla)|$. This means that the negative term is always bigger in absolute value than the positive term. This is problematic, since $a$ could be be very close to $b$ and we want $a f(\nabla) > b f(-\nabla)$ to get $L(\nabla)> 0$ and show that we have a divergence. The solution is to choose $\nabla$ to be very small. By continuity of the concave function, if we make $\nabla$ small enough (very close to 0), we can reach a point where $(f(\nabla) \approx -f(-\nabla))$. In which case, if $a = b + \epsilon$, we have that $$L(\nabla) = a f(\nabla) + b f(-\nabla) \approx a f(\nabla) - b f(\nabla) = b f(\nabla) + \epsilon f(\nabla) - b f(\nabla) = \epsilon f(\nabla) > 0.$$ In the actual proof, we show that there always exists a $\delta>0$ small enough such that any $\nabla \in (0,\delta)$ leads to $L(\nabla)>0$. This concludes the sketch of the proof.

\subsection{Proving that the objective functions are divergences}

\theoremstyle{definition}

\begin{definition}\label{1.0}
	Let $\mathbb{P}$ and $\mathbb{Q}$ be probability distributions and $S$ be the set of all probability distributions with common support. A function $D:(S,S) \to \mathbb{R}_{>0} $ is a divergence if it respects the following two conditions:
	\begin{align*}
	&D(\mathbb{P}, \mathbb{Q}) \ge 0 \\ 
	&D(\mathbb{P}, \mathbb{Q}) = 0 \iff \mathbb{P} = \mathbb{Q}.
	\end{align*}
\end{definition}

\begin{definition}\label{1.0}
	A function $f$ is concave on $X$ if and only if
	$$
	\forall x, y \in X, \forall \alpha \in [0,1]: f(\alpha x + (1-\alpha)y) \geq \alpha f(x) + (1-\alpha) f(y).
	$$
\end{definition}
\begin{lemma}\label{1.0}
	Let $f$ be a concave function on $X$, we have that
	$$
	\forall x_1, x_2, x_3 \in X \text{ s.t. } x_1 < x_2 \leq x_3: \frac{f(x_3)-f(x_1)}{x_3-x_1} \leq \frac{f(x_2) - f(x_1)}{(x_2-x_1)}
	$$
	and
	$$
	\forall x_1, x_2, x_3 \in X \text{ s.t. } x_1 \leq x_2 < x_3: \frac{f(x_3) - f(x_2)}{(x_3-x_2)} \leq \frac{f(x_3)-f(x_1)}{x_3-x_1}.
	$$
\end{lemma}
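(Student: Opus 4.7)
The plan is to prove both inequalities by the classical trick of writing the middle point $x_2$ as a convex combination of the two endpoints $x_1$ and $x_3$, and then plugging into the definition of concavity (Definition B.2).

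Concretely, since $x_1 \le x_2 \le x_3$ with at least one strict inequality, we can write $x_2 = \alpha x_1 + (1-\alpha) x_3$ with the uniquely determined weight $\alpha = \frac{x_3 - x_2}{x_3 - x_1} \in [0,1]$, so that $1 - \alpha = \frac{x_2 - x_1}{x_3 - x_1}$. Concavity then yields $f(x_2) \ge \alpha f(x_1) + (1-\alpha) f(x_3)$, and this single inequality is the engine of both bounds.

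For the first inequality I would rearrange the concavity bound as $f(x_2) - f(x_1) \ge (1-\alpha)\bigl(f(x_3) - f(x_1)\bigr) = \frac{x_2 - x_1}{x_3 - x_1}\bigl(f(x_3) - f(x_1)\bigr)$, then divide by the positive quantity $x_2 - x_1$ (which is legitimate because the hypothesis $x_1 < x_2$ is strict). For the second inequality I would instead subtract both sides from $f(x_3)$, getting $f(x_3) - f(x_2) \le \alpha \bigl(f(x_3) - f(x_1)\bigr) = \frac{x_3 - x_2}{x_3 - x_1}\bigl(f(x_3) - f(x_1)\bigr)$, and divide by $x_3 - x_2 > 0$, which is again allowed because here the strict inequality is $x_2 < x_3$. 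In each case the hypothesis is precisely what guarantees the denominator we divide by is nonzero.

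There is essentially no obstacle here: the only thing to be careful about is matching the strict inequality in the hypothesis to the divisor on the right-hand side, so that no division by zero occurs, and checking that $\alpha \in [0,1]$ lies in the range over which Definition B.2 applies (which follows immediately from $x_1 \le x_2 \le x_3$). Both inequalities collapse to a one-line consequence of concavity once $x_2$ is expressed as the above convex combination.
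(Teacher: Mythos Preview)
Your proposal is correct and follows essentially the same route as the paper: both write $x_2$ as a convex combination of $x_1$ and $x_3$ with weight $\alpha = \frac{x_3-x_2}{x_3-x_1}$, apply the definition of concavity to obtain $f(x_2) \ge \alpha f(x_1) + (1-\alpha) f(x_3)$, and then rearrange and divide by the appropriate strictly positive difference. The only differences are cosmetic (you factor via $\alpha$ and $1-\alpha$ whereas the paper expands the fractions explicitly), and your explicit remark about matching the strict inequality in each hypothesis to the denominator one divides by is exactly the point the paper is using when it distinguishes the ranges of $\alpha$.
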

\begin{proof}
	$ $
	
	Let $\alpha = \frac{(x_3-x_2)}{(x_3-x_1)}$. \\
	If $x_1 < x_2 \leq x_3$, we have that $\alpha \in [0,1)$. \\
	If $x_1 \leq x_2 < x_3$, we have that $\alpha \in (0,1)$. \\
	Either way, by concavity, we have that 
	
	\begin{align*}
	f(x_2) &\geq \frac{(x_3-x_2)}{(x_3-x_1)} f(x_1) + \left( 1 - \frac{(x_3-x_2)}{(x_3-x_1)} \right) f(x_3) \\
	&= \frac{(x_3-x_2)}{(x_3-x_1)} f(x_1) + \frac{(x_2-x_1)}{(x_3-x_1)}  f(x_3) 
	\end{align*}
	
	If $x_1 < x_2 \leq x_3$, we have that:
	\begin{align*}
	f(x_2) - f(x_1) &\geq \frac{(x_1-x_2)f(x_1)+(x_2-x_1)f(x_3)}{(x_3-x_1)} \\
	\frac{f(x_2) - f(x_1)}{(x_2-x_1)} &\geq \frac{f(x_3)-f(x_1)}{(x_3-x_1)}
	\end{align*}
	
	If $x_1 \leq x_2 < x_3$, we have that:
	\begin{align*}
	f(x_2) - f(x_3) &\geq \frac{(x_3-x_2)f(x_1)+(x_2-x_3)f(x_3)}{(x_3-x_1)} \\
	\frac{f(x_2) - f(x_3)}{(x_3-x_2)} &\geq \frac{f(x_1)-f(x_3)}{(x_3-x_1)} \\
	\frac{f(x_3) - f(x_2)}{(x_3-x_2)} &\leq \frac{f(x_3)-f(x_1)}{(x_3-x_1)}
	\end{align*}
	
\end{proof}
\begin{lemma}\label{1.0}
	Let $f:\mathbb{R} \to \mathbb{R}$ be a concave function such that $f(0)=0$. We have that
	$$
	\forall a,b,\nabla \text{ s.t. } b \geq a > 0, \nabla \neq 0: \frac{f(\nabla b)}{b} \leq \frac{f(\nabla a)}{a}.
	$$
\end{lemma}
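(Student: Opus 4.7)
The plan is to obtain the inequality directly from concavity by recognizing that $\nabla a$ is a convex combination of $\nabla b$ and $0$. Specifically, since $0 < a \leq b$, the ratio $\alpha := a/b$ lies in $(0,1]$, and we have the algebraic identity $\nabla a = \alpha \cdot \nabla b + (1-\alpha) \cdot 0$, regardless of the sign of $\nabla$. This is the key observation: no splitting into cases on the sign of $\nabla$ is needed, because $0$ is always an endpoint of the segment joining $\nabla a$ and $\nabla b$.

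First, I would apply the definition of concavity at the weight $\alpha$ with endpoints $\nabla b$ and $0$:
$$f(\nabla a) = f\bigl(\alpha \cdot \nabla b + (1-\alpha)\cdot 0\bigr) \geq \alpha f(\nabla b) + (1-\alpha) f(0) = \frac{a}{b}\, f(\nabla b),$$
where the last equality uses the hypothesis $f(0)=0$. Second, since $a > 0$, I divide both sides by $a$ to obtain $\frac{f(\nabla a)}{a} \geq \frac{f(\nabla b)}{b}$, which is exactly the claim.

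There is essentially no hard step: the argument is a one-line application of the concavity inequality anchored at the origin. The only things to verify are that $\alpha = a/b \in (0,1]$ qualifies as a convex-combination weight (immediate from $0 < a \leq b$) and that the identity $\alpha \cdot \nabla b = \nabla a$ holds (immediate from the definition of $\alpha$). The hypothesis $\nabla \neq 0$ is not actually required for the inequality itself (when $\nabla = 0$ both sides equal $0$), but is presumably imposed because the lemma is applied in the main theorem with a strictly nonzero critic perturbation.
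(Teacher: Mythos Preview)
Your proof is correct and is in fact cleaner than the paper's. The paper proceeds via an auxiliary three-point slope inequality (its Lemma~A.1) and then splits into two cases according to the sign of $\nabla$: when $\nabla>0$ it orders $0<\nabla a\leq\nabla b$ and applies the slope inequality with left endpoint $0$; when $\nabla<0$ it orders $\nabla b\leq\nabla a<0$ and applies the other half of the slope inequality with right endpoint $0$, then flips the inequality upon dividing by the negative $\nabla$. Your observation that $\nabla a=\tfrac{a}{b}\,\nabla b+(1-\tfrac{a}{b})\cdot 0$ holds for every $\nabla$ collapses both cases into a single direct application of the definition of concavity, and the hypothesis $f(0)=0$ kills the second term immediately. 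What the paper's approach buys is reusability of the slope lemma elsewhere; what your approach buys is brevity and the insight that the sign of $\nabla$ is irrelevant once the anchor point is the origin. Your remark that $\nabla\neq 0$ is not needed for the inequality itself is also accurate.
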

\begin{proof}
	$ $
	
	If $\nabla>0$ we have that  $0 < \nabla a \leq \nabla b$. \\
	By Lemma A.1 , we have that
	\begin{align*}
	&\frac{f(\nabla b)-f(0)}{\nabla(b-0)} \leq \frac{f(\nabla a)-f(0)}{\nabla(a-0)} \\
	\iff &\frac{f(\nabla b)}{b} \leq \frac{f(\nabla a)}{a}
	\end{align*}
	
	If $\nabla<0$, we have that  $\nabla b \leq \nabla a < 0$. \\
	By Lemma A.1, we have that
	\begin{align*}
	&\frac{f(0)-f(\nabla a)}{\nabla(0-a)} \leq \frac{f(0)-f(\nabla b)}{\nabla(0-b)} \\
	\iff &\frac{f(\nabla a)}{\nabla a} \leq \frac{f(\nabla b)}{\nabla b}  \\
	\iff &\frac{f(\nabla a)}{a} \geq \frac{f(\nabla b)}{b} \text{, since }\nabla<0
	\end{align*}
	
	Thus, when $\nabla \neq 0$, we have that 
	$$
	\frac{f(\nabla b)}{b} \leq \frac{f(\nabla a)}{a}
	$$
	
\end{proof}

\begin{lemma}
	Let $f:\mathbb{R} \to \mathbb{R}$ be a concave function such that $f(0)=0$, $f$ is differentiable at 0, $f'(0)\neq0$, $\sup_x f(x) = M > 0$, and $\argsup_x f(x)> 0$. Let $L(\nabla) = a f(\nabla) + b f(-\nabla)$, where $a>0$, $b>0$, and $a \neq b$.

	If $a > b$, $\exists \delta > 0$, s.t. $\forall \nabla^{*} \in (0,\delta): L(\nabla^{*}) > 0$ \\
	If $a < b$, $\exists \delta > 0$, s.t. $\forall \nabla^{*} \in (-\delta,0): L(\nabla^{*}) > 0$.
\end{lemma}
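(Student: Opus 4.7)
The plan is to reduce the claim to a one-variable local analysis of $L$ at $\nabla = 0$. Note that $L(0) = (a+b)\,f(0) = 0$, and since $f$ is assumed differentiable at $0$, so is $L$, with
\[
L'(0) = a\,f'(0) - b\,f'(0) = (a-b)\,f'(0).
\]
Once we establish $f'(0) > 0$, the conclusion is immediate. If $a > b$, then $L'(0) > 0$, and the definition of the derivative gives $L(\nabla)/\nabla \to L'(0) > 0$ as $\nabla \to 0$, so some $\delta > 0$ makes $L(\nabla) > 0$ for every $\nabla \in (0,\delta)$. The case $a < b$ is completely symmetric: $L'(0) < 0$ forces $L(\nabla)/\nabla < 0$ for small $|\nabla|$, so $L(\nabla) > 0$ for $\nabla \in (-\delta, 0)$.

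The only substantive step is therefore to show $f'(0) > 0$. Since $f'(0) \neq 0$ by hypothesis, I only need to rule out $f'(0) < 0$, and I would do this via the tangent bound for concave functions. Applying Lemma~A.1 to any triple $0 < \epsilon < x$ yields $f(x)/x \leq f(\epsilon)/\epsilon$; letting $\epsilon \to 0^+$ and using that $f$ is differentiable at $0$ (so that $f(\epsilon)/\epsilon \to f'(0)$) gives
\[
f(x) \leq f'(0)\,x \qquad \text{for every } x > 0.
\]
If $f'(0)$ were negative, this would force $f(x) < 0$ for all $x > 0$, contradicting the hypothesis $\sup_x f(x) = M > 0$ with $\argsup_x f(x) > 0$ (which guarantees at least one $x > 0$ with $f(x)$ strictly positive, whether the supremum is attained at a finite point or approached as $x \to \infty$). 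Hence $f'(0) > 0$.

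The main obstacle is precisely this passage from the global statement that the positive supremum is attained at a positive argument to the local sign statement $f'(0) > 0$. Lemma~A.1 reduces it to a short monotonicity argument on difference quotients, and this is the only place in the proof where the hypotheses $\sup_x f(x) > 0$ and $\argsup_x f(x) > 0$ are actually used. After that, the rest is a bare first-order expansion of $L$ at zero, so no further estimates on $f$ are needed.
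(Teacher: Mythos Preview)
Your argument is correct and cleaner than the paper's. Both proofs hinge on the same local information---the behaviour of $f$ near $0$---but they extract it differently. The paper introduces the ratio $R(x)=-f(-x)/f(x)$, shows $R(\nabla)\geq 1$ for small positive $\nabla$ (via $\tfrac12 f(x)+\tfrac12 f(-x)\leq f(0)=0$), proves $\lim_{x\to 0}R(x)=1$ from $f'(0)\neq 0$, and then runs an explicit $\epsilon$--$\delta$ argument with $\epsilon=(a'-b')/b'$ to conclude $a'f(\nabla^*)+b'f(-\nabla^*)>0$. You instead compute $L'(0)=(a-b)f'(0)$ directly and reduce everything to the sign of $f'(0)$, which you then pin down using the tangent-line bound $f(x)\leq f'(0)x$ from Lemma~A.1 together with the hypothesis that the supremum is positive and sits on the positive half-line. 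What your route buys is brevity and transparency: one derivative computation and one sign check replace the ratio construction, the preliminary positivity/negativity facts about $f(\pm\nabla)$, and the tailored $\epsilon$. What the paper's route buys is that it never needs to isolate the statement $f'(0)>0$; it works purely with $f'(0)\neq 0$ and the qualitative fact $R\to 1$, which some readers may find conceptually closer to the ``$f(\nabla)\approx -f(-\nabla)$ for small $\nabla$'' heuristic in the sketch (Appendix~A). Either way, the substantive content is the same first-order expansion at $\nabla=0$.
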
 
\begin{proof}
	$ $
	
	By concavity, for all $\alpha \in (0,1]$, we have $f(\alpha x^{*}) \geq \alpha f(x^{*}) > 0$. \\
	This means that for any $\nabla \in (0,x^{*}]$, we have that $f(\nabla) > 0$.
	
	By concavity, for all $x$, we have that $\frac{1}{2} f(x) + \frac{1}{2} f(-x) \leq f(\frac{1}{2} x - \frac{1}{2} x) = f(0) = 0$. \\
	Thus, for all $\nabla \in (0,x^{*}]$ we have that $0 < f(\nabla) \leq -f(-\nabla)$. \\
	This means that $f(\nabla) > 0$ and $f(-\nabla) < 0$.

	Let $R(x) = \frac{g(x)}{f(x)}$, where $g(x)=-f(-x)$. \\
	We can show that:
	$$\lim\limits_{x\to 0} R(x) = \lim\limits_{x\to 0} \frac{g(x)}{f(x)} \stackrel{\text{H}}{=} \lim\limits_{x\to 0} \frac{g'(x)}{f'(x)} = \lim\limits_{x\to 0} \frac{f'(-x)}{f'(x)} = \frac{f'(0)}{f'(0)} = 1.$$
	
	If $\nabla \in (0,x^{*}]$, by concavity we have that $0 < f(\nabla) \leq -f(-\nabla)$, thus $R(\nabla) = \frac{-f(-\nabla)}{f(\nabla)} \geq 1$. \\
	%If $\nabla \in [-x^{*},0)$, by concavity we have that $0 < f(-\nabla) \leq -f(\nabla)$, thus $R(x) = \frac{-f(-\nabla)}{f(\nabla)} \geq 1$. \\
	%This means that $R(\nabla) \geq 1$ for all $\nabla \in [-x^{*},0) \cup (0,x^{*}]$.
	
	Let $\epsilon = \frac{(a'-b')}{b'}$, where $a'>b'>0$. \\
	By the definition of the limit, $\exists \delta > 0$ s.t. $\forall x$ s.t. $0<|x|<\delta$, we have 
	$$ |R(x)-1| < \epsilon. $$
	Since this is true for all $x$ s.t. $0<|x|<\delta$, this is also true for all $0<\nabla^{*}<\min(x^{*},\delta)$. \\
	This means that
	\begin{align*}
	&|R(\nabla^{*})-1| < \epsilon \\
	\implies &(R(\nabla^{*})-1) < \frac{(a'-b')}{b'} \text{, since } R(\nabla) \geq 1 \text{ for all } \nabla \in (0,x^{*}] \\
	\implies &R(\nabla^{*}) < \frac{a'}{b'} \\
	\implies &\frac{-f(-\nabla^{*})}{f(\nabla^{*})} < \frac{a'}{b'} \\
	\implies &a' f(\nabla^{*}) + b' f(-\nabla^{*}) > 0 \\
	\end{align*}
	If $a>b$, let $a'=a$, $b'=b$,  and we have $a f(\nabla^{*}) + b f(-\nabla^{*}) > 0$ for all $0<\nabla^{*}<\min(x^{*},\delta)$. \\
	If $a<b$, let $a'=b$, $b'=a$, and we have $a f(\nabla^{*}) + b f(-\nabla^{*}) > 0$ for all $-\min(x^{*},\delta) < \nabla^{*} < 0$. \\

\end{proof}

\begin{theorem}
	Let $f:\mathbb{R} \to \mathbb{R}$ be a concave function such that $f(0)=0$, $f$ is differentiable at 0, $f'(0)\neq0$, $\sup_x f(x) = M > 0$, and $\argsup_x f(x)> 0$. Let $\mathbb{P}$ and $\mathbb{Q}$ be probability distributions with support $\mathcal{X}$. Then, we have that \[ \mathrm{D}^{Rp}_f(\mathbb{P}, \mathbb{Q}) = \sup\limits_{C:\mathcal{X} \to \mathbb{R}} \hspace{1pt}
	\underset{ \substack{x \sim \mathbb{P}  \\ y \sim \mathbb{Q}}}{\mathbb{E}\vphantom{p}} \left[ f \left( C(x) - C(y) \right) \right] \] is a divergence.
\end{theorem}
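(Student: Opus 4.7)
The plan follows the three-step template spelled out in Appendix A. Throughout, let $F(\mathbb{P},\mathbb{Q},C,f) = \mathbb{E}_{x\sim\mathbb{P},y\sim\mathbb{Q}}[f(C(x)-C(y))]$, so that $\mathrm{D}^{Rp}_f(\mathbb{P},\mathbb{Q}) = \sup_C F(\mathbb{P},\mathbb{Q},C,f)$ (I will drop the factor of $2$ since it does not affect divergence properties).

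\textbf{Step 1 (nonnegativity).} Take the constant critic $C^{w}\equiv k$. Then $C^{w}(x)-C^{w}(y)=0$ for all $x,y$, so $F(\mathbb{P},\mathbb{Q},C^{w},f)=f(0)=0$. Taking the supremum over $C$ gives $\mathrm{D}^{Rp}_f(\mathbb{P},\mathbb{Q})\ge 0$.

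\textbf{Step 2 ($\mathbb{P}=\mathbb{Q}\Rightarrow \mathrm{D}^{Rp}_f=0$).} When $\mathbb{P}=\mathbb{Q}$, the pair $(x,y)$ has the same joint law as $(y,x)$, so for any critic $C$, the random variables $C(x)-C(y)$ and $C(y)-C(x)$ are identically distributed. Using this symmetry together with Jensen's inequality applied to the concave $f$, I get
\[
F(\mathbb{P},\mathbb{P},C,f) = \tfrac{1}{2}\mathbb{E}[f(C(x)-C(y))+f(C(y)-C(x))] \le f(0) = 0,
\]
for every $C$. Hence $\mathrm{D}^{Rp}_f(\mathbb{P},\mathbb{P})\le 0$, and combined with Step 1 it equals $0$.

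\textbf{Step 3 ($\mathbb{P}\ne \mathbb{Q}\Rightarrow \mathrm{D}^{Rp}_f>0$, contrapositive).} Let $T=\{x:p(x)>q(x)\}$; since $\mathbb{P}\ne\mathbb{Q}$ we have $P(T)>0$, and since $\int(p-q)=0$ there must also be a set of positive measure where $p<q$, forcing $P(T^{c})>0$, $Q(T^{c})>0$, and $P(T)>Q(T)$. Plug in the simple critic $C'(x)=\nabla\,\mathbf{1}_{T}(x)$: the difference $C'(x)-C'(y)$ takes the value $\nabla$ on $T\times T^{c}$, $-\nabla$ on $T^{c}\times T$, and $0$ otherwise, so
\[
F(\mathbb{P},\mathbb{Q},C',f) = a\,f(\nabla) + b\,f(-\nabla), \quad a = P(T)Q(T^{c}),\ b = P(T^{c})Q(T).
\]
Since $P(T)>Q(T)$ and $P(T^{c})<Q(T^{c})$, straightforward arithmetic gives $a>b\ge 0$. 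If $b>0$, Lemma A.3 yields a $\delta>0$ such that $L(\nabla)=af(\nabla)+bf(-\nabla)>0$ for every $\nabla\in(0,\delta)$; if the degenerate case $b=0$ occurs (i.e.\ $Q(T)=0$), then $L(\nabla)=af(\nabla)$ is strictly positive at any $\nabla>0$ with $f(\nabla)>0$, which exists by the hypothesis $\sup_{x}f(x)=M>0$ attained on the positive reals. Either way, $\mathrm{D}^{Rp}_f(\mathbb{P},\mathbb{Q})\ge F(\mathbb{P},\mathbb{Q},C',f)>0$.

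\textbf{Main obstacle.} Step 3 is the delicate one. The natural indicator critic trades a positive contribution $af(\nabla)$ against a \emph{larger} negative contribution $bf(-\nabla)$ (by concavity $|f(-\nabla)|\ge f(\nabla)$), so one cannot just take any $\nabla$; the argument relies crucially on Lemma A.3, whose content is that by pushing $\nabla\downarrow 0$ the ratio $-f(-\nabla)/f(\nabla)$ tends to $1$ (using differentiability and $f'(0)\ne 0$), allowing the small excess $a-b>0$ to dominate. The only real care needed is verifying $a>b\ge 0$ from $\mathbb{P}\ne\mathbb{Q}$ and handling the edge case $b=0$ separately, as above.
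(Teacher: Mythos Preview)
Your three-step structure and Step~3 match the paper's proof closely: the same indicator critic $C'=\nabla\,\mathbf{1}_T$, the same reduction to $L(\nabla)=a\,f(\nabla)+b\,f(-\nabla)$ with $a=P(T)Q(T^c)>b=P(T^c)Q(T)$, and the same appeal to the small-$\nabla$ lemma (what the paper labels Lemma~A.4). Your Step~2, however, is genuinely different and cleaner: the paper applies Jensen's inequality twice---first conditioning on $x$ to pull $\mathbb{E}_y$ inside $f$, then once more to pull $\mathbb{E}_x$ inside---whereas you exploit the exchangeability of $(x,y)$ under $\mathbb{P}=\mathbb{Q}$ to get $F=\tfrac{1}{2}\mathbb{E}[f(Z)+f(-Z)]\le f(0)$ in one stroke via concavity. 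You also explicitly handle the degenerate case $b=0$, which the paper sidesteps by asserting ``$q>0$'' from the common-support hypothesis; your treatment is more careful on this point.
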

\begin{proof}
	$ $
	
	Let $C^{w}(x) = k \hspace{4pt} \forall x$ (worst possible choice of $C$). \\
	Let $C^{*}(x) = \underset{C:\mathcal{X} \to \mathbb{R}}{\argsup} \hspace{1pt}
	\underset{ \substack{x \sim \mathbb{P}  \\ y \sim \mathbb{Q}}}{\mathbb{E}\vphantom{p}} \left[ f \left( C(x) - C(y) \right) \right]$ (best possible choice of $C$).
	
	\textbf{\#1} Proof that $\mathrm{D}^{Rp}_f(\mathbb{P}, \mathbb{Q}) \geq 0$\\
	
	$\mathrm{D}^{Rp}_f(\mathbb{P}, \mathbb{Q}) = \underset{ \substack{x \sim \mathbb{P}  \\ y \sim \mathbb{Q}}}{\mathbb{E}\vphantom{p}} \left[ f \left( C^{*}(x) - C^{*}(y) \right) \right] \geq \underset{ \substack{x \sim \mathbb{P}  \\ y \sim \mathbb{Q}}}{\mathbb{E}\vphantom{p}} \left[ f \left( C^{w}(x) - C^{w}(y) \right) \right] = 0$. \\
	
	\textbf{\#2} Proof that $\mathbb{P} = \mathbb{Q} \implies \mathrm{D}^{Rp}_f(\mathbb{P}, \mathbb{Q}) = 0$
	
	\begin{align*}
	\mathrm{D}^{Rp}_f(\mathbb{P}, \mathbb{Q}) &= \underset{ \substack{x \sim \mathbb{P}  \\ y \sim \mathbb{P}}}{\mathbb{E}\vphantom{p}} \left[ f \left( C^{*}(x) - C^{*}(y) \right) \right] \\
	&= \underset{x \sim \mathbb{P}}{\mathbb{E}\vphantom{p}} \left[ \underset{y \sim \mathbb{P}}{\mathbb{E}\vphantom{p}} \left[ f \left( C^{*}(x) - C^{*}(y) \right) | x \right] \right] \\
	&\leq \underset{x \sim \mathbb{P}}{\mathbb{E}\vphantom{p}} \left[ f \left( \underset{y \sim \mathbb{P}}{\mathbb{E}\vphantom{p}} \left[ C^{*}(x) - C^{*}(y) | x \right] \right) \right] \\
	&= \underset{x \sim \mathbb{P}}{\mathbb{E}\vphantom{p}} \left[ f \left( C^{*}(x) - \underset{y \sim \mathbb{P}}{\mathbb{E}\vphantom{p}} \left[C^{*}(y) \right] \right) \right] \\
	&= \underset{\scriptstyle x \sim \mathbb{P}}{\mathbb{E}\vphantom{p}} \left[ f \left( C'^{*}(x) \right) \right]\text{, where } C'^{*}(x) = C^{*}(x) - \underset{y \sim \mathbb{P}}{\mathbb{E}\vphantom{p}} \left[C^{*}(y) \right] \\
	&\leq f \left( \underset{\scriptstyle x \sim \mathbb{P}}{\mathbb{E}\vphantom{p}} \left[ C'^{*}(x) \right] \right) \text{, by Jensen's inequality} \\
	& = f(0) \\
	& = 0
	\end{align*}
	Since $\mathrm{D}^{Rp}_f(\mathbb{P}, \mathbb{Q}) \geq 0$, we have that $\mathrm{D}^{Rp}_f(\mathbb{P}, \mathbb{Q}) = 0$.
	
	\textbf{\#3} Proof that $\mathrm{D}^{Rp}_f(\mathbb{P}, \mathbb{Q}) = 0 \implies \mathbb{P} = \mathbb{Q}$ \\
	
	We prove this by contraposition (i.e., we prove that $\mathbb{P} \neq \mathbb{Q} \implies \mathrm{D}^{Rp}_f(\mathbb{P}, \mathbb{Q}) \neq 0$). To do so, we design a function $C'$ that is better than the worse option ($C(x) = k \hspace{4pt} \forall x$).
	
	Assume that $\mathbb{P} \neq \mathbb{Q}$.
	
	Let $T = \argsup_S \mathbb{P}(S) - \mathbb{Q}(S)$ \footnote{If $\mathbb{P}$ and $\mathbb{Q}$ have probability density functions $p(x)$ and $q(x)$ respectively, then $T=\{x|p(x)>q(x)\}$.}. \\
	Let $p = \int_{T}  d\mathbb{P}(x) \implies (1-p) = \int_{\mathcal{X} \setminus T}  d\mathbb{P}(x)$. \\
	Let $q = \int_{T}  d\mathbb{Q}(y) \implies (1-q) = \int_{\mathcal{X} \setminus T}  d\mathbb{Q}(y)$. \\
	Since $\mathbb{P} \neq \mathbb{Q}$, we know that $T \neq \varnothing$. \\
	This means that $p > 0$, $q > 0$, and $p>q$. \\
	
	Let $C'(x) = \begin{cases}
	\nabla & \text{if $x \in T$}\\
	0 & \text{else}
	\end{cases}$, where $\nabla \neq 0$. \\
	Let $L(\nabla) = \underset{ \substack{x \sim \mathbb{P}  \\ y \sim \mathbb{Q}}}{\mathbb{E}\vphantom{p}} \left[ f \left( C'(x) - C'(y) \right) \right]$. \\
	
	We have that
	\begin{align*}
	L(\nabla) = & \int_{\mathcal{X}} \int_{\mathcal{X}}   f \left( C'(x) - C'(y) \right) d\mathbb{P}(x) d\mathbb{Q}(y) \\
	= & \int_{T} \int_{T}   f \left( C'(x) - C'(y) \right) d\mathbb{P}(x) d\mathbb{Q}(y) + \int_{T} \int_{\mathcal{X} \setminus T}   f \left( C'(x) - C'(y) \right) d\mathbb{P}(x) d\mathbb{Q}(y) + \hphantom{a} \\ 
	& \int_{\mathcal{X} \setminus T} \int_{T}   f \left( C'(x) - C'(y) \right) d\mathbb{P}(x) d\mathbb{Q}(y) + \int_{\mathcal{X} \setminus T} \int_{\mathcal{X} \setminus T}   f \left( C'(x) - C'(y) \right) d\mathbb{P}(x) d\mathbb{Q}(y) \\
	& = (1) + (2) + (3) + (4)
	\end{align*}	
	\begin{align*}
	&(1) \int_{T} \int_{T}   f \left( C'(x) - C'(y) \right) d\mathbb{P}(x) d\mathbb{Q}(y) = \int_{T} \int_{T}   f \left( \nabla - \nabla \right) d\mathbb{P}(x) d\mathbb{Q}(y) = 0 \\
	&(2) \int_{T} \int_{\mathcal{X} \setminus T}   f \left( C'(x) - C'(y) \right) d\mathbb{P}(x) d\mathbb{Q}(y) = f(\nabla) \int_{T}  d\mathbb{P}(x) \int_{\mathcal{X} \setminus T}  d\mathbb{Q}(y)  = f(\nabla)p(1-q) \\
	&(3) \int_{\mathcal{X} \setminus T} \int_{T}   f \left( C'(x) - C'(y) \right) d\mathbb{P}(x) d\mathbb{Q}(y) = f(-\nabla) \int_{\mathcal{X} \setminus T}  d\mathbb{P}(x) \int_{T}  d\mathbb{Q}(y)  = f(-\nabla)q(1-p) \\
	&(4) \int_{\mathcal{X} \setminus T} \int_{\mathcal{X} \setminus T}   f \left( C'(x) - C'(y) \right) d\mathbb{P}(x) d\mathbb{Q}(y) = \int_{\mathcal{X} \setminus T} \int_{\mathcal{X} \setminus T}   f \left( 0 - 0 \right) d\mathbb{P}(x) d\mathbb{Q}(y) = 0
	\end{align*}
	
	This means that $L(\nabla) = a f(\nabla) + b f(-\nabla)$, where $a = p(1-q) > 0$ and $b = q(1-p) > 0$. \\ We know that $a = p(1-q) > q(1-p) = b$. \\
	Thus, by Lemma A.4, we have that $\exists \nabla^{*}>0$ s.t. $L(\nabla^{*}) > 0$.

	Thus, if we let $\nabla = \nabla^{*}$, we have that \\
	 $\mathrm{D}^{Rp}_f(\mathbb{P}, \mathbb{Q}) = \underset{ \substack{x \sim \mathbb{P}  \\ y \sim \mathbb{Q}}}{\mathbb{E}\vphantom{p}} \left[ f \left( C^{*}(x) - C^{*}(y) \right) \right] \geq \underset{ \substack{x \sim \mathbb{P}  \\ y \sim \mathbb{Q}}}{\mathbb{E}\vphantom{p}} \left[ f \left( C'(x) - C'(y) \right) \right] > 0$.
	
\end{proof}

\begin{theorem}
	Let $f:\mathbb{R} \to \mathbb{R}$ be a concave function such that $f(0)=0$, $f$ is differentiable at 0, $f'(0)\neq0$, $\sup_x f(x) = M > 0$, and $\argsup_x f(x)> 0$. Let $\mathbb{P}$ and $\mathbb{Q}$ be probability distributions with support $\mathcal{X}$. Then, we have that \[ \mathrm{D}^{Ralf}_{f}(\mathbb{P}, \mathbb{Q}) = \sup\limits_{\scriptstyle C:\mathcal{X} \to \mathbb{R}} \hspace{1pt}
	\underset{\scriptstyle x \sim \mathbb{P}}{\mathbb{E}\vphantom{p}} \left[ f \left( C(x) - \underset{y \sim \mathbb{Q}}{\mathbb{E}}C(y) \right) \right] \] is a divergence.
\end{theorem}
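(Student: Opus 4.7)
The plan is to follow the three-step template of Appendix A exactly as used in the proof of Theorem B.1 for $\mathrm{D}^{Rp}_f$, adapting each step to the one-sided RalfGAN objective.

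First, I would establish $\mathrm{D}^{Ralf}_f(\mathbb{P},\mathbb{Q})\ge 0$ by plugging in the constant critic $C^w(x)=k$. The inner expression becomes $f(k-k)=f(0)=0$, so the supremum over all $C$ is at least $0$.

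Second, I would show $\mathbb{P}=\mathbb{Q}\Rightarrow \mathrm{D}^{Ralf}_f(\mathbb{P},\mathbb{Q})=0$ by a single application of Jensen's inequality (valid since $f$ is concave). For any critic $C$, taking expectations under $\mathbb{P}$, $\mathbb{E}_{x\sim\mathbb{P}}\bigl[f(C(x)-\mathbb{E}_{y\sim\mathbb{P}}C(y))\bigr]\le f(\mathbb{E}_{x\sim\mathbb{P}}[C(x)-\mathbb{E}_{y\sim\mathbb{P}}C(y)])=f(0)=0$. Taking the supremum preserves the bound, so $\mathrm{D}^{Ralf}_f(\mathbb{P},\mathbb{P})\le 0$, and combined with Step 1, equality holds. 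Note this is cleaner than the Rp case because only a single Jensen step is required.

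Third, and most importantly, I would show the converse by contraposition: $\mathbb{P}\ne\mathbb{Q}\Rightarrow \mathrm{D}^{Ralf}_f(\mathbb{P},\mathbb{Q})>0$. Following the sketch in Appendix A, set $T=\argsup_S(\mathbb{P}(S)-\mathbb{Q}(S))$, $p=\mathbb{P}(T)$, $q=\mathbb{Q}(T)$, so that $T\ne\varnothing$, $p>0$, $q\ge 0$, and $p>q$. Use the sub-optimal indicator critic $C'(x)=\nabla\,\mathbf{1}_{x\in T}$ with $\nabla>0$ to be chosen small. Then $\mathbb{E}_{y\sim\mathbb{Q}}C'(y)=\nabla q$, and the argument of $f$ is $\nabla(1-q)$ on $T$ and $-\nabla q$ on $\mathcal{X}\setminus T$, giving the lower bound
$$L(\nabla)\;:=\;p\,f(\nabla(1-q))+(1-p)\,f(-\nabla q).$$

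The main obstacle is that Lemma A.4 does not apply directly, since the two arguments of $f$ are $\nabla(1-q)$ and $-\nabla q$ rather than the symmetric pair $\pm\nabla$. I would overcome this by repeating the limit-at-zero style of argument used inside Lemma A.4. First, I would observe that concavity together with $f(0)=0$ and $\argsup f>0$ forces $f'(0)>0$ (since $f(tx^{*})\ge tf(x^{*})>0$ for small $t>0$ shows the right-derivative at $0$ is non-negative, and non-zero by assumption). Hence for small $\nabla>0$ we have $f(\nabla(1-q))>0$. Dividing through,
$$\frac{L(\nabla)}{f(\nabla(1-q))}\;=\;p\;-\;(1-p)\,\frac{-f(-\nabla q)}{f(\nabla(1-q))}\xrightarrow[\nabla\to 0^{+}]{}\;p-(1-p)\,\frac{q}{1-q}\;=\;\frac{p-q}{1-q}\;>\;0,$$
using $\lim_{\nabla\to 0^{+}}f(\nabla(1-q))/(\nabla(1-q))=\lim_{\nabla\to 0^{+}}-f(-\nabla q)/(\nabla q)=f'(0)$. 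Therefore there exists $\nabla^{*}>0$ with $L(\nabla^{*})>0$, which yields $\mathrm{D}^{Ralf}_f(\mathbb{P},\mathbb{Q})\ge L(\nabla^{*})>0$, completing the proof. If preferred, this limit argument can be packaged as a Lemma A.4-style statement asserting positivity of $a\,f(\alpha\nabla)+b\,f(-\beta\nabla)$ for small $\nabla>0$ whenever $a\alpha>b\beta$; the computation here is exactly that with $(a,b,\alpha,\beta)=(p,1-p,1-q,q)$.
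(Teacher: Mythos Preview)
Your proposal is correct. Steps 1 and 2 coincide with the paper's proof essentially verbatim. Step 3 also starts identically---same set $T$, same indicator critic $C'$, same expression $L(\nabla)=p\,f(\nabla(1-q))+(1-p)\,f(-\nabla q)$---but then diverges in method. The paper does not take the limit directly; instead it uses Lemma~A.3 (the concavity scaling inequality $f(\nabla b)/b\le f(\nabla a)/a$) to symmetrize the arguments of $f$, splitting into the cases $q<1-q$ and $q\ge 1-q$ so that the expression is bounded below by something of the form $a\,f(\pm\nabla')+b\,f(\mp\nabla')$ with a common $\nabla'$, and then invokes Lemma~A.4. Your route bypasses Lemma~A.3 and the case split entirely: by observing $f'(0)>0$ (which you justify correctly) and using differentiability at $0$ to compute $\lim_{\nabla\to 0^+}L(\nabla)/f(\nabla(1-q))=(p-q)/(1-q)>0$, you get positivity in one stroke. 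This is cleaner and in effect proves the generalized form of Lemma~A.4 you mention (positivity of $a\,f(\alpha\nabla)+b\,f(-\beta\nabla)$ for small $\nabla$ whenever $a\alpha>b\beta$), which would also streamline the paper's proofs of the $\mathrm{D}^{Ra}$ and $\mathrm{D}^{Rc}$ cases. One cosmetic point: your limit step $-f(-\nabla q)/(\nabla q)\to f'(0)$ presumes $q>0$; when $q=0$ the second term of $L(\nabla)$ vanishes and $L(\nabla)=p\,f(\nabla)>0$ directly, so the conclusion still holds (the paper asserts $q>0$ without justification, so this edge case is no worse here).
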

\begin{proof}
	$ $
	
	Let $C^{w}(x) = k \hspace{4pt} \forall x$ (worst possible choice of $C$). \\
	Let $C^{*}(x) = \underset{C:\mathcal{X} \to \mathbb{R}}{\argsup} \hspace{1pt}
	\underset{x \sim \mathbb{P}}{\mathbb{E}\vphantom{p}} \left[ f \left( C(x) - \underset{y \sim \mathbb{Q}}{\mathbb{E}}C(y) \right) \right]$ (best possible choice of $C$).
	
	\textbf{\#1} Proof that $\mathrm{D}^{Ralf}_{f}(\mathbb{P}, \mathbb{Q}) \geq 0$\\
	\begin{align*}
		\mathrm{D}^{Ralf}_{f}(\mathbb{P}, \mathbb{Q}) &= \underset{x \sim \mathbb{P}}{\mathbb{E}\vphantom{p}} \left[ f \left( C^{*}(x) - \underset{y \sim \mathbb{Q}}{\mathbb{E}}C^{*}(y) \right) \right] \\
		&\geq \underset{x \sim \mathbb{P}}{\mathbb{E}\vphantom{p}} \left[ f \left( C^{w}(x) - \underset{y \sim \mathbb{Q}}{\mathbb{E}}C^{w}(y) \right) \right] \\
		&= \underset{x \sim \mathbb{P}}{\mathbb{E}\vphantom{p}} \left[ f \left( k - k \right) \right] \\
		&= 0.
	\end{align*}
	
	\textbf{\#2} Proof that $\mathbb{P} = \mathbb{Q} \implies \mathrm{D}^{Ralf}_{f}(\mathbb{P}, \mathbb{Q}) = 0$
	\begin{align*}
	\mathrm{D}^{Ralf}_{f}(\mathbb{P}, \mathbb{Q}) &= \sup\limits_{\scriptstyle C:\mathcal{X} \to \mathbb{R}} \hspace{1pt} \underset{\scriptstyle x \sim \mathbb{P}}{\mathbb{E}\vphantom{p}} \left[ f \left( C(x) - \underset{y \sim \mathbb{Q}}{\mathbb{E}}C(y) \right) \right] \\
	&= \sup\limits_{\scriptstyle C:\mathcal{X} \to \mathbb{R}} \hspace{1pt} \underset{\scriptstyle x \sim \mathbb{P}}{\mathbb{E}\vphantom{p}} \left[ f \left( C(x) - \underset{y \sim \mathbb{P}}{\mathbb{E}}C(y) \right) \right] \text{, since } \mathbb{P} = \mathbb{Q} \\
	&= \sup\limits_{\substack{\scriptstyle C':\mathcal{X} \to \mathbb{R} \\ \text{s.t. } \mathbb{E}[C'(x)] = 0}} \hspace{1pt} \underset{\scriptstyle x \sim \mathbb{P}}{\mathbb{E}\vphantom{p}} \left[ f \left( C'(x) \right) \right] \\
	&= \underset{\scriptstyle x \sim \mathbb{P}}{\mathbb{E}\vphantom{p}} \left[ f \left( C'^{*}(x) \right) \right] \text{, where } C'^{*} = \argsup\limits_{\substack{\scriptstyle C':\mathcal{X} \to \mathbb{R} \\ \text{s.t. } \mathbb{E}[C'(x)] = 0}} \hspace{1pt}
	\underset{x \sim \mathbb{P}}{\mathbb{E}\vphantom{p}} \left[ f \left( C'(x) \right) \right] \\
	&\leq f \left( \underset{\scriptstyle x \sim \mathbb{P}}{\mathbb{E}\vphantom{p}} \left[ C'^{*}(x) \right] \right) \text{, by Jensen's inequality} \\
	& = f(0) \\
	& = 0
	\end{align*}
	
	Since $\mathrm{D}^{Ralf}_{f}(\mathbb{P}, \mathbb{Q}) \geq 0$, we have that $\mathrm{D}^{Ralf}_{f}(\mathbb{P}, \mathbb{Q}) = 0$.
	
	\textbf{\#3} Proof that $\mathrm{D}^{Ra}_{f}(\mathbb{P}, \mathbb{Q}) = 0 \implies \mathbb{P} = \mathbb{Q}$ \\
	
	We prove this by contraposition (i.e., we prove that $\mathbb{P} \neq \mathbb{Q} \implies \mathrm{D}^{Ra}_{f}(\mathbb{P}, \mathbb{Q}) \neq 0$). To do so, we design a function $C'$ that is better than the worse option ($C(x) = k \hspace{4pt} \forall x$).
	
	Assume that $\mathbb{P} \neq \mathbb{Q}$.
	
	Let $T = \argsup_S \mathbb{P}(S) - \mathbb{Q}(S)$.\\
	Let $p = \int_{T}  d\mathbb{P}(x) \implies (1-p) = \int_{\mathcal{X} \setminus T}  d\mathbb{P}(x)$. \\
	Let $q = \int_{T}  d\mathbb{Q}(y) \implies (1-q) = \int_{\mathcal{X} \setminus T}  d\mathbb{Q}(y)$. \\
	
	Since $\mathbb{P} \neq \mathbb{Q}$, we know that $T \neq \varnothing$. \\
	This means that $p > 0$, $q > 0$, and $p>q$. \\
	
	Let $C'(x) = \begin{cases}
	\nabla & \text{if $x \in T$}\\
	0 & \text{else}
	\end{cases}$, where $\nabla \neq 0$. \\
	Let $L(\nabla) = \underset{\scriptstyle x \sim \mathbb{P}}{\mathbb{E}\vphantom{p}} \left[ f \left( C'(x) - \underset{y \sim \mathbb{Q}}{\mathbb{E}}C'(y) \right) \right]$. \\
	
	We have that
	\begin{align*}
	L(\nabla) = & \int_{\mathcal{X}}  f \left( C'(x) - \underset{y \sim \mathbb{Q}}{\mathbb{E}}C'(y) \right) d\mathbb{P}(x) \\
	= & \int_{\mathcal{X}}  f \left( C'(x) - \int_{T} \nabla d\mathbb{Q}(y) \right) d\mathbb{P}(x) \\
	= & \int_{\mathcal{X}}  f \left( C'(x) - \nabla q \right) d\mathbb{P}(x) \\
	= & \int_{T}  f \left( \nabla - \nabla q \right) d\mathbb{P}(x) + \int_{\mathcal{X} \setminus T}  f \left( 0 - \nabla q \right) d\mathbb{P}(x) \\
	= & \hspace*{2pt} p f \left( \nabla (1 - q) \right) + (1-p) f \left(- \nabla q \right)
	\end{align*}
	
	Case 1: If $q < (1-q)$, by Lemma A.3, we have that:
	\begin{align*}
	&\frac{f(-\nabla (1-q))}{(1-q)} \leq \frac{f(-\nabla q)}{q} \\
	\implies &f(-\nabla q) \geq \frac{q}{(1-q)}f(-\nabla (1-q))
	\end{align*}
	Thus, $L(\nabla) \geq p f \left( \nabla (1 - q) \right) + \frac{(1-p)q}{(1-q)} f \left(-\nabla (1 - q) \right)$. \\
	Knowing that $p > q$ and $(1-p) < (1-q)$, we have that $p > q > \frac{q(1-p)}{(1-q)}$.\\
	Thus, by Lemma A.4, we have that $\exists \nabla^{*}>0$ s.t. $L(\nabla^{*}) > 0$. \\
	
	Case 2: If $q \geq (1-q)$, by Lemma A.3, we have that:
	\begin{align*}
	&\frac{f(\nabla q)}{q} \leq \frac{f(\nabla (1-q))}{(1-q)} \\
	\implies &f(\nabla (1-q)) \geq \frac{(1-q)}{q}f(\nabla q)
	\end{align*}
	Thus, $L(\nabla) \geq \frac{p(1-q)}{q} f \left( \nabla q \right) + (1-p) f \left(-\nabla q \right)$. \\
	Knowing that $p > q$ and $(1-p) < (1-q)$, we have that $(1-p) < (1-q) < \frac{(1-q)p}{q}$.\\
	Thus, by Lemma A.4, we have that $\exists \nabla^{*}>0$ s.t. $L(\nabla^{*}) > 0$. \\
	
	Thus, if we let $\nabla = \nabla^{*}$, we have that \\
	$\mathrm{D}^{Ralf}_{f}(\mathbb{P}, \mathbb{Q}) = \underset{x \sim \mathbb{P}}{\mathbb{E}\vphantom{p}} \left[ f \left( C^{*}(x) - \underset{y \sim \mathbb{Q}}{\mathbb{E}}C^{*}(y) \right) \right] \geq \underset{x \sim \mathbb{P}}{\mathbb{E}\vphantom{p}} \left[ f \left( C'(x) - \underset{y \sim \mathbb{Q}}{\mathbb{E}}C'(y) \right) \right] > 0$.
	
\end{proof}

\begin{theorem}
	Let $f:\mathbb{R} \to \mathbb{R}$ be a concave function such that $f(0)=0$, $f$ is differentiable at 0, $f'(0)\neq0$, $\sup_x f(x) = M > 0$, and $\argsup_x f(x)> 0$. Let $\mathbb{P}$ and $\mathbb{Q}$ be probability distributions with support $\mathcal{X}$. Then, we have that \[ \mathrm{D}^{Ra}_{f}(\mathbb{P}, \mathbb{Q}) = \sup\limits_{\scriptstyle C:\mathcal{X} \to \mathbb{R}} \hspace{1pt}
	\underset{\scriptstyle x \sim \mathbb{P}}{\mathbb{E}\vphantom{p}} \left[ f \left( C(x) - \underset{y \sim \mathbb{Q}}{\mathbb{E}}C(y) \right) \right] +
	\underset{\scriptstyle y \sim \mathbb{Q}}{\mathbb{E}\vphantom{p}} \left[ f \left( \underset{x \sim \mathbb{P\vphantom{Q}}}{\mathbb{E}}C(x) - C(y) \right) \right] \] is a divergence.
\end{theorem}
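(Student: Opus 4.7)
The plan is to follow the three-step framework of Theorems B.5 and B.6. Step 1 (nonnegativity) is immediate via the constant critic $C^{w}(x) = k$: both arguments of $f$ vanish, giving $f(0)+f(0)=0$, so $\mathrm{D}^{Ra}_f(\mathbb{P},\mathbb{Q}) \geq 0$. Step 2 ($\mathbb{P}=\mathbb{Q} \implies D=0$) uses Jensen's inequality: for any critic $C$, the centered function $C'(x) = C(x) - \mathbb{E}_{y\sim\mathbb{P}}[C(y)]$ satisfies $\mathbb{E}_{x\sim\mathbb{P}}[C'(x)]=0$, so each of the two expectations in the sup is bounded above by $f(0)=0$ via concavity. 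Together with Step 1 this forces the value to be $0$.

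Step 3 proves the contrapositive $\mathbb{P} \neq \mathbb{Q} \implies \mathrm{D}^{Ra}_f(\mathbb{P},\mathbb{Q}) > 0$. Following the template of the earlier proofs, take $T = \argsup_S \mathbb{P}(S)-\mathbb{Q}(S) \neq \varnothing$, set $p = \mathbb{P}(T) > q = \mathbb{Q}(T) > 0$, and consider the sub-optimal critic $C'(x) = \nabla$ on $T$ and $0$ otherwise. Since $\mathbb{E}_{x\sim\mathbb{P}}C'(x)=\nabla p$ and $\mathbb{E}_{y\sim\mathbb{Q}}C'(y)=\nabla q$, splitting each expectation into an integral over $T$ and its complement gives
\[
L(\nabla) \;=\; \underbrace{p\,f(\nabla(1-q)) + (1-p)\,f(-\nabla q)}_{L_1(\nabla)} \;+\; \underbrace{(1-q)\,f(\nabla p) + q\,f(-\nabla(1-p))}_{L_2(\nabla)}.
\]

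The main obstacle is that $L(\nabla)$ has four terms rather than two, so Lemma A.4 cannot be applied directly. The plan is to bound $L_1$ and $L_2$ below separately. The summand $L_1$ is exactly the expression analyzed in the proof of Theorem B.6, which already yields a $\delta_1 > 0$ with $L_1(\nabla)>0$ on $(0,\delta_1)$. For $L_2$, I would carry out an analogous case split via Lemma A.3. If $p \leq 1-p$, Lemma A.3 gives $f(\nabla p) \geq \tfrac{p}{1-p}f(\nabla(1-p))$, so
\[
L_2(\nabla) \;\geq\; \tfrac{(1-q)p}{1-p}\,f(\nabla(1-p)) + q\,f(-\nabla(1-p)),
\]
which is of the two-term form required by Lemma A.4, with the leading coefficient exceeding the trailing one iff $p(1-q) > q(1-p) \iff p > q$, yielding some $\delta_2 > 0$ on which this bound is positive. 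If instead $p > 1-p$, Lemma A.3 gives $f(-\nabla(1-p)) \geq \tfrac{1-p}{p} f(-\nabla p)$, so
\[
L_2(\nabla) \;\geq\; (1-q)\,f(\nabla p) + \tfrac{q(1-p)}{p}\,f(-\nabla p),
\]
and once again the required coefficient inequality reduces to $p > q$, so Lemma A.4 furnishes a $\delta_2 > 0$. Taking any $\nabla^* \in (0,\min(\delta_1,\delta_2))$ gives $L(\nabla^*) = L_1(\nabla^*) + L_2(\nabla^*) > 0$, and therefore $\mathrm{D}^{Ra}_f(\mathbb{P},\mathbb{Q}) \geq L(\nabla^*) > 0$, completing the contraposition.
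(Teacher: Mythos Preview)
Your proposal is correct, and Steps 1 and 2 match the paper essentially verbatim (your Step 2 applies Jensen to each summand separately, whereas the paper uses the pointwise inequality $f(C'(x))+f(-C'(x))\le 2f(0)$; both are valid and equally short).

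Step 3 is where you genuinely diverge from the paper. The paper does \emph{not} split $L(\nabla)$ into $L_1+L_2$. Instead it treats all four terms jointly and performs a single case split on whether $(1-q)\ge p$ or $p>(1-q)$. In each case it applies Lemma~A.3 three times to collapse the four-term expression into a single two-term bound of the form $Af(\nabla c)+Bf(-\nabla c)$ (with $c=1-q$ in Case~1 and $c=p$ in Case~2), and then invokes Lemma~A.4 once. Your route is more modular: you recognize $L_1$ as literally the Ralf expression from Theorem~B.6 and import that result wholesale, then handle $L_2$ by a symmetric case split on $p$ versus $1-p$, obtaining two separate $\delta$'s and taking their minimum. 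This is closer in spirit to the paper's proof of the \emph{Rc} divergence (Theorem~B.8), which also splits into two independent pieces and intersects the resulting intervals. What you gain is reuse of the Ralf argument and only one application of Lemma~A.3 per piece; what the paper's unified bound gains is a single invocation of Lemma~A.4 rather than two (plus the implicit one inside the Ralf result). Both arguments are complete, and the coefficient checks in your $L_2$ bounds correctly reduce to $p(1-q)>q(1-p)\iff p>q$.
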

\begin{proof}
	$ $
	
	Let $C^{w}(x) = k \hspace{4pt} \forall x$ (worst possible choice of $C$). \\
	Let $C^{*}(x) = \underset{C:\mathcal{X} \to \mathbb{R}}{\argsup} \hspace{1pt}
	\underset{x \sim \mathbb{P}}{\mathbb{E}\vphantom{p}} \left[ f \left( C(x) - \underset{y \sim \mathbb{Q}}{\mathbb{E}}C(y) \right) \right] +	\underset{\scriptstyle y \sim \mathbb{Q}}{\mathbb{E}\vphantom{p}} \left[ f \left( \underset{x \sim \mathbb{P\vphantom{Q}}}{\mathbb{E}}C(x) - C(y) \right) \right]$ \\ (best possible choice of $C$).
	
	\textbf{\#1} Proof that $\mathrm{D}^{Ra}_{f}(\mathbb{P}, \mathbb{Q}) \geq 0$
	\begin{align*}
	\mathrm{D}^{Ra}_{f}(\mathbb{P}, \mathbb{Q}) &= \underset{x \sim \mathbb{P}}{\mathbb{E}\vphantom{p}} \left[ f \left( C^{*}(x) - \underset{y \sim \mathbb{Q}}{\mathbb{E}}C^{*}(y) \right) \right] + \underset{\scriptstyle y \sim \mathbb{Q}}{\mathbb{E}\vphantom{p}} \left[ f \left( \underset{x \sim \mathbb{P\vphantom{Q}}}{\mathbb{E}}C^{*}(x) - C^{*}(y) \right) \right] \\
	&\geq \underset{x \sim \mathbb{P}}{\mathbb{E}\vphantom{p}} \left[ f \left( C^{w}(x) - \underset{y \sim \mathbb{Q}}{\mathbb{E}}C^{w}(y) \right) \right] + \underset{\scriptstyle y \sim \mathbb{Q}}{\mathbb{E}\vphantom{p}} \left[ f \left( \underset{x \sim \mathbb{P\vphantom{Q}}}{\mathbb{E}}C^{w}(x) - C^{w}(y) \right) \right] \\
	&= \underset{x \sim \mathbb{P}}{\mathbb{E}\vphantom{p}} \left[ f \left( k - k \right) \right] + \underset{x \sim \mathbb{Q}}{\mathbb{E}\vphantom{p}} \left[ f \left( k - k \right) \right] \\
	&= 0.
	\end{align*}
	
	\textbf{\#2} Proof that $\mathbb{P} = \mathbb{Q} \implies \mathrm{D}^{Ra}_{f}(\mathbb{P}, \mathbb{Q}) = 0$
	
	Let $C'(x) = C(x)- \underset{x \sim \mathbb{P\vphantom{Q}}}{\mathbb{E}}C(x)$
	\begin{align*}
		\mathrm{D}^{Ra}_{f}(\mathbb{P}, \mathbb{Q}) &= \sup\limits_{\scriptstyle C:\mathcal{X} \to \mathbb{R}} \hspace{1pt}
		\underset{\scriptstyle x \sim \mathbb{P}}{\mathbb{E}\vphantom{p}} \left[ f \left( C(x) - \underset{y \sim \mathbb{Q}}{\mathbb{E}}C(y) \right) \right] +
		\underset{\scriptstyle y \sim \mathbb{Q}}{\mathbb{E}\vphantom{p}} \left[ f \left( \underset{x \sim \mathbb{P\vphantom{Q}}}{\mathbb{E}}C(x) - C(y) \right) \right] \\
		& = \sup\limits_{\scriptstyle C:\mathcal{X} \to \mathbb{R}} \hspace{1pt}
		\underset{\scriptstyle x \sim \mathbb{P}}{\mathbb{E}\vphantom{p}} \left[ f \left( C(x) - \underset{y \sim \mathbb{P}}{\mathbb{E}}C(y) \right) \right] +
		\underset{\scriptstyle x \sim \mathbb{P}}{\mathbb{E}\vphantom{p}} \left[ f \left( \underset{x \sim \mathbb{P\vphantom{Q}}}{\mathbb{E}}C(y) - C(x) \right) \right] \\
		& = \sup\limits_{\substack{\scriptstyle C':\mathcal{X} \to \mathbb{R} \\ \text{s.t. } \mathbb{E}[C'(x)] = 0}} \hspace{1pt}
		\underset{\scriptstyle x \sim \mathbb{P}}{\mathbb{E}\vphantom{p}} \left[ f \left( C'(x) \right) + f \left( - C'(x) \right) \right] \\
		& \leq 2 \sup\limits_{\substack{\scriptstyle C':\mathcal{X} \to \mathbb{R} \\ \text{s.t. } \mathbb{E}[C'(x)] = 0}} \hspace{1pt}
		\underset{\scriptstyle x \sim \mathbb{P}}{\mathbb{E}\vphantom{p}} \left[ f \left( \frac{1}{2} C'(x) - \frac{1}{2} C'(x) \right) \right] \text{, by concavity} \\
		& = 2 \sup\limits_{\substack{\scriptstyle C':\mathcal{X} \to \mathbb{R} \\ \text{s.t. } \mathbb{E}[C'(x)] = 0}} \hspace{1pt}
		\underset{\scriptstyle x \sim \mathbb{P}}{\mathbb{E}\vphantom{p}} \left[ f \left( 0 \right) \right] \\
		& = 0
	\end{align*}
	Since $\mathrm{D}^{Ra}_{f}(\mathbb{P}, \mathbb{Q}) \geq 0$, we have that $\mathrm{D}^{Ra}_{f}(\mathbb{P}, \mathbb{Q}) = 0$.
	
	\textbf{\#3} Proof that $\mathrm{D}^{Ra}_{f}(\mathbb{P}, \mathbb{Q}) = 0 \implies \mathbb{P} = \mathbb{Q}$ \\
	
	We prove this by contraposition (i.e., we prove that $\mathbb{P} \neq \mathbb{Q} \implies \mathrm{D}^{Ra}_{f}(\mathbb{P}, \mathbb{Q}) \neq 0$). To do so, we design a function $C'$ that is better than the worse option ($C(x) = k \hspace{4pt} \forall x$).

	Assume that $\mathbb{P} \neq \mathbb{Q}$.
	
	Let $T = \argsup_S \mathbb{P}(S) - \mathbb{Q}(S)$.\\
	Let $p = \int_{T}  d\mathbb{P}(x) \implies (1-p) = \int_{\mathcal{X} \setminus T}  d\mathbb{P}(x)$. \\
	Let $q = \int_{T}  d\mathbb{Q}(y) \implies (1-q) = \int_{\mathcal{X} \setminus T}  d\mathbb{Q}(y)$. \\
	
	Since $\mathbb{P} \neq \mathbb{Q}$, we know that $T \neq \varnothing$. \\
	This means that $p > 0$, $q > 0$, and $p>q$. \\
	
	Let $C'(x) = \begin{cases}
	\nabla & \text{if $x \in T$}\\
	0 & \text{else}
	\end{cases}$, where $\nabla \neq 0$. \\
	Let $L(\nabla) = \underset{\scriptstyle x \sim \mathbb{P}}{\mathbb{E}\vphantom{p}} \left[ f \left( C'(x) - \underset{y \sim \mathbb{Q}}{\mathbb{E}}C'(y) \right) \right] +
	\underset{\scriptstyle y \sim \mathbb{Q}}{\mathbb{E}\vphantom{p}} \left[ f \left( \underset{x \sim \mathbb{P\vphantom{Q}}}{\mathbb{E}}C'(x) - C'(y) \right) \right] $. \\
	
	We have that
	\begin{align*}
	L(\nabla) = & \int_{\mathcal{X}}  f \left( C'(x) - \underset{y \sim \mathbb{Q}}{\mathbb{E}}C'(y) \right) d\mathbb{P}(x) + \int_{\mathcal{X}}  f \left( \underset{x \sim \mathbb{P\vphantom{Q}}}{\mathbb{E}}C'(x) - C'(y) \right) d\mathbb{Q}(y) \\
	= & \int_{\mathcal{X}}  f \left( C'(x) - \int_{T} \nabla d\mathbb{Q}(y) \right) d\mathbb{P}(x) + \int_{\mathcal{X}}  f \left( \int_{T} \nabla d\mathbb{P}(x) - C'(y) \right) d\mathbb{Q}(y) \\
	= & \int_{\mathcal{X}}  f \left( C'(x) - \nabla q \right) d\mathbb{P}(x) + \int_{\mathcal{X}}  f \left( \nabla p - C'(y) \right) d\mathbb{Q}(y) \\
	= & \int_{T}  f \left( \nabla (1-q) \right) d\mathbb{P}(x) + \int_{\mathcal{X} \setminus T}  f \left( -\nabla q \right) d\mathbb{P}(x) + \hphantom{a} \\
	& \int_{T}  f \left( \nabla (p-1) \right) d\mathbb{Q}(y) + \int_{T}  f \left( \nabla p \right) d\mathbb{Q}(y) \\
	= & \hspace*{2pt} p f \left( \nabla (1-q) \right) + (1-p) f \left( -\nabla q \right) + q f \left( \nabla (p-1) \right) + (1-q) f \left( \nabla p \right) \\
	= & \hspace*{2pt} p f \left( \nabla (1-q) \right) + (1-p) f \left( -\nabla q \right) + q f \left( -\nabla (1-p) \right) + (1-q) f \left( \nabla p \right)
	\end{align*}
	
	Case 1: If $(1-q) \geq p$, by Lemma A.3, we have that:
	\begin{align*}
	&\frac{f(\nabla (1-q))}{(1-q)} \leq \frac{f(\nabla p)}{p} \\
	\implies &f(\nabla p) \geq \frac{p}{(1-q)}f(\nabla (1-q))
	\end{align*}
	Also, we have that $(1-p) \geq q$, thus, by Lemma A.3, we have that:
	\begin{align*}
	&\frac{f(-\nabla (1-p))}{(1-p)} \leq \frac{f(-\nabla q)}{q} \\
	\implies &f(-\nabla q) \geq \frac{q}{(1-p)}f(-\nabla (1-p))
	\end{align*}
	Also, $q < p \implies (1-q) > (1-p)$, thus, by Lemma A.3, we have that:
	\begin{align*}
	&\frac{f(-\nabla (1-q))}{(1-q)} \leq \frac{f(-\nabla (1-p))}{(1-p)} \\
	\implies &f(-\nabla (1-p)) \geq \frac{(1-p)}{(1-q)}f(-\nabla (1-q))
	\end{align*}
	Thus, 
	\begin{align*}
	L(\nabla) &= \hspace*{2pt} p f \left( \nabla (1-q) \right) + (1-p) f \left( -\nabla q \right) + q f \left( -\nabla (1-p) \right) + (1-q) f \left( \nabla p \right) \\
	&\geq p f \left( \nabla (1-q) \right) + q f \left( -\nabla (1-p) \right) + q f \left( -\nabla (1-p) \right) + p f \left( \nabla (1-q) \right) \\
	&= \hspace*{2pt} 2p f \left( \nabla (1-q) \right) + 2q f \left( -\nabla (1-p) \right) \\
	&\geq 2p f \left( \nabla (1-q) \right) + 2\frac{q(1-p)}{(1-q)} f \left( -\nabla (1-q) \right)
	\end{align*}
	Knowing that $p > q$ and $(1-p) < (1-q)$, we have that $2p > 2q > \frac{2q(1-p)}{(1-q)}$.\\
	Thus, by Lemma A.4, we have that $\exists \nabla^{*}>0$ s.t. $L(\nabla^{*}) > 0$. \\
	
	Case 2: If $p > (1-q)$, by Lemma A.3, we have that:
	\begin{align*}
	&\frac{f(\nabla p)}{p} \leq \frac{f(\nabla (1-q))}{(1-q)} \\
	\implies &f(\nabla (1-q)) \geq \frac{(1-q)}{p}f(\nabla p)
	\end{align*}
	Also, we have that $q > (1-p)$, thus, by Lemma A.3, we have that:
	\begin{align*}
	&\frac{f(-\nabla q)}{q} \leq \frac{f(-\nabla (1-p))}{(1-p)} \\
	\implies &f(-\nabla (1-p)) \geq \frac{(1-p)}{q}f(-\nabla q)
	\end{align*}
	Also, $p > q$, thus, by Lemma A.3, we have that:
	\begin{align*}
	&\frac{f(-\nabla p)}{p} \leq \frac{f(-\nabla q)}{q} \\
	\implies &f(-\nabla q) \geq \frac{q}{p}f(-\nabla p)
	\end{align*}
	Thus, 
	\begin{align*}
	L(\nabla) &= \hspace*{2pt} p f \left( \nabla (1-q) \right) + (1-p) f \left( -\nabla q \right) + q f \left( -\nabla (1-p) \right) + (1-q) f \left( \nabla p \right) \\
	&\geq (1-q) f \left( \nabla p \right) + (1-p) f \left( -\nabla q \right) + (1-p) f \left( -\nabla q \right) + (1-q) f \left( \nabla p \right) \\
	&= \hspace*{2pt} 2(1-q) f \left( \nabla p \right) + 2(1-p) f \left( -\nabla q \right) \\
	&\geq 2(1-q) f \left( \nabla p \right) + 2 \frac{q(1-p)}{p} f \left( -\nabla p \right)
	\end{align*}
	Knowing that $p > q$ and $(1-p) < (1-q)$, we have that $2(1-q) > 2(1-p) > 2\frac{q(1-p)}{p}$.\\
	Thus, by Lemma A.4, we have that $\exists \nabla^{*}>0$ s.t. $L(\nabla^{*}) > 0$. \\
	
	Thus, if we let $\nabla = \nabla^{*}$, we have that
	\begin{align*}
	\mathrm{D}^{Ra}_{f}(\mathbb{P}, \mathbb{Q}) &= \underset{\scriptstyle x \sim \mathbb{P}}{\mathbb{E}\vphantom{p}} \left[ f \left( C^{*}(x) - \underset{y \sim \mathbb{Q}}{\mathbb{E}}C^{*}(y) \right) \right] +
	\underset{\scriptstyle y \sim \mathbb{Q}}{\mathbb{E}\vphantom{p}} \left[ f \left( \underset{x \sim \mathbb{P\vphantom{Q}}}{\mathbb{E}}C^{*}(x) - C^{*}(y) \right) \right] \\
	&\geq \underset{\scriptstyle x \sim \mathbb{P}}{\mathbb{E}\vphantom{p}} \left[ f \left( C'(x) - \underset{y \sim \mathbb{Q}}{\mathbb{E}}C'(y) \right) \right] +
	\underset{\scriptstyle y \sim \mathbb{Q}}{\mathbb{E}\vphantom{p}} \left[ f \left( \underset{x \sim \mathbb{P\vphantom{Q}}}{\mathbb{E}}C'(x) - C'(y) \right) \right] \\
	&> 0.
	\end{align*}

\end{proof}

\begin{theorem}
	Let $f:\mathbb{R} \to \mathbb{R}$ be a concave function such that $f(0)=0$, $f$ is differentiable at 0, $f'(0)\neq0$, $\sup_x f(x) = M > 0$, and $\argsup_x f(x)> 0$. Let $\mathbb{P}$ and $\mathbb{Q}$ be probability distributions with support $\mathcal{X}$. Let  $\mathbb{M} = \frac{1}{2}\mathbb{P} + \frac{1}{2}\mathbb{Q}$ Then, we have that \[ \mathrm{D}^{Rc}_{f}(\mathbb{P}, \mathbb{Q}) = \sup\limits_{\scriptstyle C:\mathcal{X} \to \mathbb{R}} \hspace{1pt}
	\underset{\scriptstyle x \sim \mathbb{P}}{\mathbb{E}\vphantom{p}} \left[ f \left( C(x) - \underset{m \sim \mathbb{M}}{\mathbb{E}}C(m) \right) \right] +
	\underset{\scriptstyle y \sim \mathbb{Q}}{\mathbb{E}\vphantom{p}} \left[ f \left( \underset{m \sim \mathbb{M}}{\mathbb{E}}C(m) - C(y) \right) \right] \] is a divergence.
\end{theorem}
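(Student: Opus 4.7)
I would follow the three-step template used for the other divergences in Appendix B almost verbatim, exploiting the fact that in this case both expectations share the same centering quantity $\mathbb{E}_{m \sim \mathbb{M}} C(m)$.

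\textbf{Step 1 (non-negativity).} Plug in the constant critic $C^w(x)=k$. Then $\mathbb{E}_{m\sim\mathbb{M}}C^w(m)=k$, both $f$-arguments vanish, and the objective equals $f(0)+f(0)=0$. Hence the supremum is $\ge 0$.

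\textbf{Step 2 ($\mathbb{P}=\mathbb{Q}\Rightarrow \mathrm{D}^{Rc}_f=0$).} When $\mathbb{P}=\mathbb{Q}$, we have $\mathbb{M}=\mathbb{P}=\mathbb{Q}$. Reparametrize by $C'(x)=C(x)-\mathbb{E}_{m\sim\mathbb{M}}C(m)$, so that $\mathbb{E}_{\mathbb{P}}[C'(x)]=0$. The objective collapses to
\[
\mathbb{E}_{x\sim\mathbb{P}}\bigl[f(C'(x))+f(-C'(x))\bigr],
\]
and by concavity $f(t)+f(-t)\le 2f(0)=0$ pointwise, so the supremum is $\le 0$. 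Combined with Step 1, it equals $0$.

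\textbf{Step 3 ($\mathbb{P}\neq\mathbb{Q}\Rightarrow \mathrm{D}^{Rc}_f>0$).} This is the substantive step, and I would imitate the RaGAN argument with a simpler regrouping. Let $T=\argsup_S \mathbb{P}(S)-\mathbb{Q}(S)$, $p=\mathbb{P}(T)$, $q=\mathbb{Q}(T)$, so $p>q\ge 0$ and $p>0$. Plug in the indicator critic $C'(x)=\nabla$ on $T$ and $0$ elsewhere. The key observation is that the shared centering term is $\mathbb{E}_{m\sim\mathbb{M}}C'(m)=\nabla r$ with $r:=\tfrac12(p+q)\in(q,p)$, so $0<r<1$ and $1-r>0$. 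A direct integration over $T$ and its complement yields
\[
L(\nabla) = p\,f(\nabla(1-r)) + (1-p)\,f(-\nabla r) + q\,f(-\nabla(1-r)) + (1-q)\,f(\nabla r).
\]
I would then regroup by the absolute value of the argument:
\[
L(\nabla) = \bigl[p\,f(\nabla(1-r)) + q\,f(-\nabla(1-r))\bigr] + \bigl[(1-q)\,f(\nabla r) + (1-p)\,f(-\nabla r)\bigr].
\]
Each bracket has the form $a\,f(t)+b\,f(-t)$ with $a>b\ge 0$ and $t>0$ for $\nabla>0$: the first bracket because $p>q$, the second because $1-q>1-p$. Applying Lemma A.4 to each (and, in the degenerate subcases $q=0$ or $p=1$, using directly that $f(t)>0$ for small $t>0$, which holds because $f'(0)>0$ follows from concavity together with $f(0)=0$ and $\argsup f>0$), I obtain $\delta_1,\delta_2>0$ making each bracket strictly positive on $(0,\delta_i)$. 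Taking $\nabla^*\in(0,\min(\delta_1,\delta_2))$ gives $L(\nabla^*)>0$, hence $\mathrm{D}^{Rc}_f(\mathbb{P},\mathbb{Q})\ge L(\nabla^*)>0$.

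\textbf{Main obstacle.} As in the other proofs, Step 3 is where the real work is, but I expect it to be noticeably easier here than for $\mathrm{D}^{Ra}_f$: because both the $\mathbb{P}$- and $\mathbb{Q}$-expectations share the \emph{same} centering constant $\nabla r$ (rather than $\nabla q$ and $\nabla p$ as in the RaGAN case), the four terms split cleanly into two symmetric pairs of the form required by Lemma A.4, and no case analysis on the ordering of $p$ versus $1-q$ is needed.
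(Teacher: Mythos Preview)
Your proposal is correct and matches the paper's own proof essentially line for line: the paper also defers Steps~1 and~2 to the $\mathrm{D}^{Ra}_f$ argument, and in Step~3 computes exactly your $L(\nabla)$ (with $c$ in place of your $r$), splits it into the same two brackets $L_1(\nabla)=p\,f(\nabla(1-c))+q\,f(-\nabla(1-c))$ and $L_2(\nabla)=(1-q)\,f(\nabla c)+(1-p)\,f(-\nabla c)$, applies Lemma~A.4 to each, and takes $\delta=\min(\delta_1,\delta_2)$. Your remark that the shared centering constant avoids the case split needed for $\mathrm{D}^{Ra}_f$ is exactly why the paper's proof here is shorter; your explicit treatment of the degenerate cases $q=0$ and $p=1$ is slightly more careful than the paper, which tacitly relies on the common-support hypothesis to ensure $q>0$ and $p<1$.
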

\begin{proof}
	$ $
	
	Let $C^{w}(x) = k \hspace{4pt} \forall x$ (worst possible choice of $C$). \\
	Let $C^{*}(x) = \underset{C:\mathcal{X} \to \mathbb{R}}{\argsup} \hspace{1pt}
	\underset{\scriptstyle x \sim \mathbb{P}}{\mathbb{E}\vphantom{p}} \left[ f \left( C(x) - \underset{m \sim \mathbb{M}}{\mathbb{E}}C(m) \right) \right] +
	\underset{\scriptstyle y \sim \mathbb{Q}}{\mathbb{E}\vphantom{p}} \left[ f \left( \underset{m \sim \mathbb{M}}{\mathbb{E}}C(m) - C(y) \right) \right]$ \\ (best possible choice of $C$).
	
	\textbf{\#1} Proof that $\mathrm{D}^{Rc}_{f}(\mathbb{P}, \mathbb{Q}) \geq 0$
	
	Same proof as theorem A.6 \#1.
	
	\textbf{\#2} Proof that $\mathbb{P} = \mathbb{Q} \implies \mathrm{D}^{Rc}_{f}(\mathbb{P}, \mathbb{Q}) = 0$
	
	Same proof as theorem A.6 \#2.
	
	\textbf{\#3} Proof that $\mathrm{D}^{Rc}_{f}(\mathbb{P}, \mathbb{Q}) = 0 \implies \mathbb{P} = \mathbb{Q}$
	
	We prove this by contraposition (i.e., we prove that $\mathbb{P} \neq \mathbb{Q} \implies \mathrm{D}^{Rc}_{f}(\mathbb{P}, \mathbb{Q}) \neq 0$). To do so, we design a function $C'$ that is better than the worse option ($C(x) = k \hspace{4pt} \forall x$).
	
	Assume that $\mathbb{P} \neq \mathbb{Q}$.\\
	
	Make the same assumptions as theorem A.6 \#2. The only thing that changes is $L(\nabla)$.
	
	We instead have that
	\begin{align*} L(\nabla) &= \hspace*{2pt} p f \left( \nabla (1-c) \right) + (1-p) f \left( -\nabla c \right) + q f \left( -\nabla (1-c) \right) + (1-q) f \left( \nabla c \right) \\
	&= L_1(\nabla) + L_2(\nabla),
	\end{align*}
	where $c = \frac{1}{2}p + \frac{1}{2}q$, \\ $L_1(\nabla)=p f \left( \nabla (1-c) \right) + q f \left( -\nabla (1-c) \right)$, \\ $L_2(\nabla)=(1-q) f \left( \nabla c \right) + (1-p) f \left( -\nabla c \right)$.
	
	Knowing that $p > q$ and $(1-q) > (1-p)$, we can use Lemma A.4 to show that \\
	$\exists \delta_1 > 0$, s.t. $\forall \nabla_1^{*} \in (0,\delta_1): L_1(\nabla_1^{*}) > 0$ and $\exists \delta_2 > 0$, s.t. $\forall \nabla_2^{*} \in (0,\delta_2): L_2(\nabla_2^{*}) > 0$. \\
	Thus, let $\delta = \min(\delta_1, \delta_2)$. We have that $\forall \nabla^{*} \in (0,\delta): L_1(\nabla^{*}) > 0$ and $L_2(\nabla^{*}) > 0$. \\
	This means that $L(\nabla)=L_1(\nabla^{*})+L_2(\nabla^{*}) > 0$
	
\end{proof}

\subsection{Inequalities between Relativistic Divergences}

To prove that $\mathrm{D_1}$ is weaker than $\mathrm{D_2}$, we can just show that 
$\mathrm{D_1}(\mathbb{P}, \mathbb{Q}) \leq \mathrm{D_2}(\mathbb{P}, \mathbb{Q})$ since we have that: $$\mathrm{D_1}(\mathbb{P}_n, \mathbb{P}) \leq \mathrm{D_2}(\mathbb{P}_n, \mathbb{P})\to 0 \implies \mathrm{D_1}(\mathbb{P}_n, \mathbb{P}) \to 0.$$

\begin{theorem}
	Let $f:\mathbb{R} \to \mathbb{R}$ be a concave function such that $f(0)=0$, $f$ is differentiable at 0, $f'(0)\neq0$, $\sup_x f(x) = M > 0$, and $\argsup_x f(x)> 0$. Let $\mathbb{P}$ and $\mathbb{Q}$ be probability distributions with support $\mathcal{X}$. Then, we have that 
	\begin{itemize}
		\item $\mathrm{D}^{S}(\mathbb{P}, \mathbb{Q}) \leq \mathrm{D}^{Rp}_f(\mathbb{P}, \mathbb{Q})$ 
		\item $\mathrm{D}^{Rp}_f(\mathbb{P}, \mathbb{Q}) \leq \mathrm{D}^{Ralf}_{f}(\mathbb{P}, \mathbb{Q})$ and $\mathrm{D}^{Rp}_f(\mathbb{P}, \mathbb{Q}) \leq \mathrm{D}_f^{Ra}(\mathbb{P}, \mathbb{Q})$
	\end{itemize}
\end{theorem}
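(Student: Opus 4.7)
The strategy is the one announced at the top of Appendix~C: to show $\mathrm{D}_1$ is weaker than $\mathrm{D}_2$, it is enough to establish a pointwise inequality $\mathrm{D}_1(\mathbb{P},\mathbb{Q}) \le \mathrm{D}_2(\mathbb{P},\mathbb{Q})$. Each of the three claims then reduces to exhibiting, for each candidate critic on the smaller side, a critic on the larger side whose objective value dominates it. Two of the three inequalities will fall out of a single application of Jensen's inequality (which is available because $f$ is concave); the inequality $\mathrm{D}^{S} \le \mathrm{D}^{Rp}_f$ is the only one that requires a small reparametrization trick, and this is the step I expect to be the main obstacle since it is the only place where $f(0)=0$ has to be used.

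For $\mathrm{D}^{S}_f(\mathbb{P},\mathbb{Q}) \le \mathrm{D}^{Rp}_f(\mathbb{P},\mathbb{Q})$, given any critic $C:\mathcal{X}\to\mathbb{R}$ I propose to plug the rescaled critic $C'(x)=C(x)/2$ into the RpGAN objective. Concavity of $f$ with $f(0)=0$ gives
\[
f\!\left(\tfrac12 C(x) + \tfrac12(-C(y))\right) \;\ge\; \tfrac12 f(C(x)) + \tfrac12 f(-C(y)),
\]
by the two-point inequality applied at $\alpha=1/2$ (no extra hypothesis on $f$ beyond concavity is needed here, because only the midpoint identity is used). Multiplying by $2$, taking expectations under $x\sim\mathbb{P}$, $y\sim\mathbb{Q}$, and then taking the supremum over $C$ on both sides, the left-hand side becomes the symmetric GAN objective at $C$ and the right-hand side, after reparametrizing $C'=C/2$ (a bijection on the set of all real-valued critics), becomes $\mathrm{D}^{Rp}_f(\mathbb{P},\mathbb{Q})$.

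For $\mathrm{D}^{Rp}_f \le \mathrm{D}^{Ralf}_f$ and $\mathrm{D}^{Rp}_f \le \mathrm{D}^{Ra}_f$, I would simply apply Jensen's inequality to the inner expectation. Since $f$ is concave,
\[
\underset{y\sim\mathbb{Q}}{\mathbb{E}}\bigl[f(C(x)-C(y))\bigr] \;\le\; f\!\bigl(C(x)-\underset{y\sim\mathbb{Q}}{\mathbb{E}}C(y)\bigr),
\]
for every fixed $x$, and analogously with the roles of $x$ and $y$ swapped. Taking the outer expectation under $\mathbb{P}$ and multiplying by $2$ yields $\mathrm{D}^{Rp}_f(\mathbb{P},\mathbb{Q}) \le \mathrm{D}^{Ralf}_f(\mathbb{P},\mathbb{Q})$ after supremizing over $C$. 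For the Ra comparison, I would split $2\mathbb{E}_{x,y}[f(C(x)-C(y))]$ into two identical copies, apply Jensen on $y$ to the first and on $x$ to the second, and sum the two bounds to match the two terms of $\mathrm{D}^{Ra}_f$.

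The only nontrivial step is the first one, because it is where the symmetric GAN objective (which acts on $C(x)$ and $-C(y)$ separately) has to be folded into the single argument $C(x)-C(y)$ of the RpGAN objective; the rescaling $C\mapsto C/2$ is what makes concavity applicable at a convex combination rather than at a sum. The other two inequalities are essentially bookkeeping around Jensen's inequality and do not require any new idea beyond the concavity of $f$.
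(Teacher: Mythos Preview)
Your proposal is correct and follows essentially the same route as the paper's proof. For $\mathrm{D}^{S}\le\mathrm{D}^{Rp}_f$ the paper also uses the midpoint concavity inequality $f\bigl(\tfrac12 C(x)-\tfrac12 C(y)\bigr)\ge \tfrac12 f(C(x))+\tfrac12 f(-C(y))$ followed by the rescaling $C'=\tfrac12 C$, and for the two remaining inequalities it applies Jensen to the inner expectation exactly as you describe (splitting $2\,\mathbb{E}_{x,y}[\cdot]$ into two copies for the $\mathrm{D}^{Ra}_f$ case); the only cosmetic difference is that the paper fixes the optimal critic first and then bounds, whereas you argue for an arbitrary $C$ and take the supremum at the end. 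One small remark: contrary to what your opening paragraph suggests, the hypothesis $f(0)=0$ is not actually used anywhere in these three inequalities---only concavity is needed---as you yourself note in the parenthetical.
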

\begin{proof}
	$ $
	
	Showing that $\mathrm{D}^{S}(\mathbb{P}, \mathbb{Q}) \leq \mathrm{D}^{Rp}_f(\mathbb{P}, \mathbb{Q})$: \\
	Let $$C_S^{*}(x) = \argsup_{C: \mathcal{X} \to \mathbb{R}} \mathbb{E}_{x \sim \mathbb{P}}\left[ f(C(x)) \right] + \mathbb{E}_{z \sim \mathbb{Q}} \left[ f(-C(y)) \right]$$ \\
	and $$C_{Rp}^{*}(x) = \argsup_{C: \mathcal{X} \to \mathbb{R}} \underset{ \substack{x \sim \mathbb{P}  \\ y \sim \mathbb{Q}}}{\mathbb{E}\vphantom{p}} \left[ f \left( C(x) - C(y) \right) \right].$$
	\begin{align*}
	\mathrm{D}^{S}(\mathbb{P}, \mathbb{Q}) &= \sup_{C: \mathcal{X} \to \mathbb{R}} \mathbb{E}_{x \sim \mathbb{P}}\left[ f(C(x)) \right] + \mathbb{E}_{z \sim \mathbb{Q}} \left[ f(-C(y)) \right] \\
	&= 2 \underset{ \substack{x \sim \mathbb{P}  \\ y \sim \mathbb{Q}}}{\mathbb{E}\vphantom{p}} \left[\frac{1}{2} f \left( C_S^{*}(x) \right) + \frac{1}{2} f \left( -C_S^{*}(y) \right) \right] \\
	&\leq 2 \underset{ \substack{x \sim \mathbb{P}  \\ y \sim \mathbb{Q}}}{\mathbb{E}\vphantom{p}} \left[ f \left( \frac{1}{2} C_S^{*}(x) - \frac{1}{2} C_S^{*}(y) \right) \right] \\
	&= 2 \underset{ \substack{x \sim \mathbb{P}  \\ y \sim \mathbb{Q}}}{\mathbb{E}\vphantom{p}} \left[ f \left( C'(x) - C'(y) \right) \right] \text{, where $C'(x) = \frac{1}{2}C_S^{*}(x)$} \\
	&\leq \sup_{C: \mathcal{X} \to \mathbb{R}} 2 \underset{ \substack{x \sim \mathbb{P}  \\ y \sim \mathbb{Q}}}{\mathbb{E}\vphantom{p}} \left[ f \left( C(x) - C(y) \right) \right] \\
	&= \mathrm{D}^{Rp}_f(\mathbb{P}, \mathbb{Q})
	\end{align*}
	Showing that $\mathrm{D}^{Rp}_f(\mathbb{P}, \mathbb{Q}) \leq \mathrm{D}^{Ralf}_{f}(\mathbb{P}, \mathbb{Q})$: \\
	\begin{align*}
	\mathrm{D}^{Rp}_f(\mathbb{P}, \mathbb{Q}) &=\argsup_{C: \mathcal{X} \to \mathbb{R}} 2 \underset{ \substack{x \sim \mathbb{P}  \\ y \sim \mathbb{Q}}}{\mathbb{E}\vphantom{p}} \left[ f \left( C(x) - C(y) \right) \right]  \\
	&= 2 \underset{ \substack{x \sim \mathbb{P}  \\ y \sim \mathbb{Q}}}{\mathbb{E}\vphantom{p}} \left[ f \left( C_{Rp}^{*}(x) - C_{Rp}^{*}(y) \right) \right] \\
	&= 2 \underset{x \sim \mathbb{P}}{\mathbb{E}\vphantom{p}} \left[ \underset{y \sim \mathbb{Q}}{\mathbb{E}\vphantom{p}} \left[ f \left( C_{Rp}^{*}(x) - C_{Rp}^{*}(y) \right) | x \right] \right] \\
	&\leq 2 \underset{x \sim \mathbb{P}}{\mathbb{E}\vphantom{p}} \left[ f \left( \underset{y \sim \mathbb{Q}}{\mathbb{E}\vphantom{p}} \left[ C_{Rp}^{*}(x) - C_{Rp}^{*}(y) | x \right] \right) \right] \\
	&= 2 \underset{x \sim \mathbb{P}}{\mathbb{E}\vphantom{p}} \left[ f \left( C_{Rp}^{*}(x) - \underset{y \sim \mathbb{Q}}{\mathbb{E}\vphantom{p}} \left[C_{Rp}^{*}(y) \right] \right) \right] \\
	&\leq \sup_{C: \mathcal{X} \to \mathbb{R}} 2 \underset{x \sim \mathbb{P}}{\mathbb{E}\vphantom{p}} \left[ f \left( C(x) - \underset{y \sim \mathbb{Q}}{\mathbb{E}\vphantom{p}} \left[C(y) \right] \right) \right] \\
	& = \mathrm{D}^{Ralf}_{f}(\mathbb{P}, \mathbb{Q})
	\end{align*}
	Showing that $\mathrm{D}^{Rp}_f(\mathbb{P}, \mathbb{Q}) \leq \mathrm{D}^{Ra}_f(\mathbb{P}, \mathbb{Q})$: \\
	\begin{align*}
	\mathrm{D}^{Rp}_f(\mathbb{P}, \mathbb{Q}) &=\argsup_{C: \mathcal{X} \to \mathbb{R}} 2 \underset{ \substack{x \sim \mathbb{P}  \\ y \sim \mathbb{Q}}}{\mathbb{E}\vphantom{p}} \left[ f \left( C(x) - C(y) \right) \right]  \\
	&= 2 \underset{ \substack{x \sim \mathbb{P}  \\ y \sim \mathbb{Q}}}{\mathbb{E}\vphantom{p}} \left[ f \left( C_{Rp}^{*}(x) - C_{Rp}^{*}(y) \right) \right] \\
	&= \underset{x \sim \mathbb{P}}{\mathbb{E}\vphantom{p}} \left[ \underset{y \sim \mathbb{Q}}{\mathbb{E}\vphantom{p}} \left[ f \left( C_{Rp}^{*}(x) - C_{Rp}^{*}(y) \right) | x \right] \right] + \underset{y \sim \mathbb{Q}}{\mathbb{E}\vphantom{p}} \left[ \underset{x \sim \mathbb{P}}{\mathbb{E}\vphantom{p}} \left[ f \left( C_{Rp}^{*}(x) - C_{Rp}^{*}(y) \right) | y \right] \right] \\
	&\leq \underset{x \sim \mathbb{P}}{\mathbb{E}\vphantom{p}} \left[ f \left( \underset{y \sim \mathbb{Q}}{\mathbb{E}\vphantom{p}} \left[ C_{Rp}^{*}(x) - C_{Rp}^{*}(y) | x \right] \right) \right] + \underset{y \sim \mathbb{Q}}{\mathbb{E}\vphantom{p}} \left[ f \left( \underset{x \sim \mathbb{P}}{\mathbb{E}\vphantom{p}} \left[ C_{Rp}^{*}(x) - C_{Rp}^{*}(y) | y \right] \right) \right] \\
	&= \underset{x \sim \mathbb{P}}{\mathbb{E}\vphantom{p}} \left[ f \left( C_{Rp}^{*}(x) - \underset{y \sim \mathbb{Q}}{\mathbb{E}\vphantom{p}} \left[C_{Rp}^{*}(y) \right] \right) \right] + \underset{y \sim \mathbb{Q}}{\mathbb{E}\vphantom{p}} \left[ f \left( \underset{x \sim \mathbb{P}}{\mathbb{E}\vphantom{p}} \left[C_{Rp}^{*}(x)\right] - C_{Rp}^{*}(y) \right) \right] \\
	&\leq \sup_{C: \mathcal{X} \to \mathbb{R}} \underset{x \sim \mathbb{P}}{\mathbb{E}\vphantom{p}} \left[ f \left( C(x) - \underset{y \sim \mathbb{Q}}{\mathbb{E}\vphantom{p}} \left[C(y) \right] \right) \right] + \underset{y \sim \mathbb{Q}}{\mathbb{E}\vphantom{p}} \left[ f \left( \underset{x \sim \mathbb{P}}{\mathbb{E}\vphantom{p}} \left[C(x) \right] - C(y) \right) \right] \\
	& = \mathrm{D}_f^{Ra}(\mathbb{P}, \mathbb{Q})
	\end{align*}
\end{proof}

\subsection{Bias in RalfGANs, RaGANs, and RcGANs}

Note that we refer to the second term in RaGANs as "RaGAN2". When possible, we calculate the bias for RalfGANs, RaGAN2s, RaGANs, and RcGANs.

Let \\
$\underset{x \sim \mathbb{P}}{\mathbb{E}}[C(x)] = \mu_x$, \\
$\underset{x \sim \mathbb{P}}{Var}[C(x)] = \sigma_x^2 $, \\
$\underset{x \sim \mathbb{P}}{\mathbb{E}}[C(x)^2] = \sigma_x^2 + \mu_x^2$,\\
\newline
$\underset{y \sim \mathbb{Q}}{\mathbb{E}}[C(y)] = \mu_y$, \\
$\underset{y \sim \mathbb{Q}}{Var}[C(y)] = \sigma_y^2 $, \\
$\underset{y \sim \mathbb{Q}}{\mathbb{E}}[C(y)^2] = \sigma_y^2 + \mu_y^2$. \\

In a minibatch of size $k$, we have that $x_1, \ldots , x_k$ and $y_1, \ldots , y_k$ are iid. \\
Thus, $C(x_1), \ldots , C(x_k)$ and $C(y_1), \ldots , C(y_k)$ are also iid. \\ This means that: \\
$\mathbb{E}[C(x_i)C(x_j)] = \mathbb{E}[C(x_i)]\mathbb{E}[C(x_j)] = \mu_x^2 \hspace*{4pt}\forall i \neq j$, \\ $\mathbb{E}[C(y_i)C(y_j)] = \mathbb{E}[C(y_i)]\mathbb{E}[C(y_j)] = \mu_y^2 \hspace*{4pt}\forall i \neq j$.

\subsubsection{SGAN}

$$f(x)=\log(\sigmoid(x)) + \log(2) = -\log(1+e^{-x}) + \log(2)$$

\begin{align*}
\mathrm{Bias}^{RaSGAN}(\mathbb{P},\mathbb{Q}) &= \mathbb{E} \left[ f \left( C(x) - \frac{1}{k}\sum_{i=1}^{k} C(y_i) \right) - f \left( C(x) - \mu_y \right) \right] \\
&= \mathbb{E} \left[ -\log \left( 1 + e^{\frac{1}{k}\sum_{i=1}^{k} C(y_i) - C(x) } \right) + \log(2) + \log \left( 1 + e^{\mu_y - C(x)} \right) - \log(2) \right] \\
&= \mathbb{E} \left[ \log \left( \frac{ 1 + e^{\mu_y - C(x)} }{ 1 + e^{ \frac{1}{k}\sum_{i=1}^{k} C(y_i) - C(x) } } \right) \right] \\
&= \mathbb{E} \left[ \log \left( \frac{ e^{ C(x) } + e^{\mu_y} }{ e^{ C(x) } + e^{ \frac{1}{k}\sum_{i=1}^{k} C(y_i) } } \right) \right] \\
&= \mathbb{E} \left[ \log \left(  e^{ C(x) } + e^{\mu_y} \right) - \log \left( { e^{ C(x) } + e^{ \frac{1}{k}\sum_{i=1}^{k} C(y_i) } } \right) \right] \\
&\approx \mathbb{E} \left[ C(x) + e^{\mu_y-C(x)} - C(x) - e^{ \frac{1}{k}\sum_{i=1}^{k} C(y_i) - C(x) } \right] \\
&= \mathbb{E} \left[ \frac{e^{\mu_y}- e^{ \frac{1}{k}\sum_{i=1}^{k} C(y_i)}}{e^{C(x)}} \right]
\end{align*}

We cannot find a close form for the bias.

\subsubsection{LSGAN}

$$f(x)=-(x-1)^2 + 1$$

\begin{align*}
\mathrm{\widehat{Div}^{RaLSGAN}}(\mathbb{P},\mathbb{Q}) &=\mathbb{E} \left[ \frac{1}{k} \sum_{i=1}^{k} f \left( C(x_i) - \frac{1}{k}\sum_{j=1}^{k} C(y_j) \right) \right] \\
&= \mathbb{E} \left[ \frac{1}{k} \sum_{i=1}^{k} \left( -\left( C(x_i) - \frac{1}{k}\sum_{j=1}^{k} C(y_j) - 1 \right)^2 + 1 \right) \right] \\
&= \mathbb{E} \left[ \frac{1}{k} \sum_{i=1}^{k} \left( -C(x_i)^2 + \frac{2}{k} \sum_{j=1}^{k} C(x_i) C(y_j) + 2C(x_i) - 2\frac{1}{k} \sum_{j=1}^{k} C(y_j) - \frac{1}{k^2} \left( \sum_{j=1}^{k} C(y_j) \right)^2  \right) \right] \\
&= \frac{1}{k} \sum_{i=1}^{k} \left( -\mathbb{E} \left[C(x_i)^2 \right] + \frac{2}{k} \sum_{j=1}^{k} \mathbb{E} \left[ C(x_i) \right] \mathbb{E} \left[ C(y_j) \right] + 2 \mathbb{E} \left[C(x_i) \right] - 2\frac{1}{k} \sum_{j=1}^{k} \mathbb{E} \left[ C(y_j) \right] \right. \\
&\left. - \frac{1}{k^2}\sum_{j=1}^{k} \mathbb{E}[C(y_j)^2] - \frac{1}{k^2}\sum_{\substack{r=1 \\ r\neq j}}^{k} \sum_{\substack{j=1}}^{k} \mathbb{E} [C(y_j)] \mathbb{E}[C(y_r)] \right) \\
&= \frac{1}{k} \sum_{i=1}^{k} \left( -\sigma_x^2 -\mu_x^2 + 2 \mu_x \mu_y + 2 \mu_x - 2 \mu_y - \frac{1}{k} (\sigma_y^2 +\mu_y^2) - \frac{(k-1)}{k} \mu_y^2 \right) \\
&= -\sigma_x^2 -\mu_x^2 + 2 \mu_x \mu_y + 2 \mu_x - 2 \mu_y - \frac{1}{k} \sigma_y^2 -  \mu_y^2 \\
\end{align*}

\begin{align*}
\mathrm{\widehat{Div}^{RaLSGAN2}}(\mathbb{P},\mathbb{Q}) &=\mathbb{E} \left[ \frac{1}{k} \sum_{j=1}^{k} f \left(\frac{1}{k}\sum_{i=1}^{k}C(x_i) - C(y_j) \right) \right] \\
&= \mathbb{E} \left[ \frac{1}{k} \sum_{j=1}^{k} \left( -\left( \frac{1}{k}\sum_{i=1}^{k}C(x_i) - C(y_j) - 1 \right)^2 - 1 \right) \right] \\
&= \mathbb{E} \left[ \frac{1}{k} \sum_{j=1}^{k} \left( -C(y_j)^2 + \frac{2}{k} \sum_{x=1}^{k} C(x_i) C(y_j) - 2C(y_j) + 2\frac{1}{k} \sum_{i=1}^{k} C(x_i) - \frac{1}{k^2} \left( \sum_{i=1}^{k} C(x_i) \right)^2  \right) \right] \\
&= \frac{1}{k} \sum_{j=1}^{k} \left( -\mathbb{E} \left[C(y_j)^2 \right] + \frac{2}{k} \sum_{i=1}^{k} \mathbb{E} \left[ C(x_i) \right] \mathbb{E} \left[ C(y_j) \right] - 2 \mathbb{E} \left[C(y_j) \right] + 2\frac{1}{k} \sum_{i=1}^{k} \mathbb{E} \left[ C(x_i) \right] \right. \\
&\left. - \frac{1}{k^2}\sum_{i=1}^{k} \mathbb{E}[C(x_i)^2] - \frac{1}{k^2}\sum_{\substack{r=1 \\ r\neq i}}^{k} \sum_{\substack{i=1}}^{k} \mathbb{E} [C(x_i)] \mathbb{E}[C(x_r)] \right) \\
&= \frac{1}{k} \sum_{j=1}^{k} \left( -\sigma_y^2 -\mu_y^2 + 2 \mu_x \mu_y - 2 \mu_y + 2 \mu_x - \frac{1}{k} (\sigma_x^2 +\mu_x^2) - \frac{(k-1)}{k} \mu_x^2 \right) \\
&= -\sigma_y^2 -\mu_y^2 + 2 \mu_x \mu_y - 2 \mu_y + 2 \mu_x - \frac{1}{k} \sigma_x^2 -  \mu_x^2 \\
\end{align*}

\begin{align*}
\mathrm{Div^{RaLSGAN}}(\mathbb{P},\mathbb{Q}) &=\mathbb{E} \left[ f \left( C(x) - \mu_y \right) \right] \\
&=\mathbb{E} \left[ - \left( C(x) - \mu_y - 1 \right)^2 - 1 \right] \\
&=\mathbb{E} \left[ - C(x)^2 + 2 C(x) \mu_y + 2 C(x) - 2 \mu_y - \mu_y^2 \right] \\
&= -\sigma_x^2 - \mu_x^2 + 2\mu_x \mu_y + 2 \mu_x - 2 \mu_y - \mu_y^2
\end{align*}

\begin{align*}
\mathrm{Div^{RaLSGAN2}}(\mathbb{P},\mathbb{Q}) &=\mathbb{E} \left[ f \left(\mu_x - C(y) \right) \right] \\
&=\mathbb{E} \left[ - \left( \mu_x - C(y) - 1 \right)^2 - 1 \right] \\
&=\mathbb{E} \left[ - \mu_x^2 + 2 C(y) \mu_x - 2 C(y) + 2 \mu_x - C(y)^2 \right] \\
&= -\sigma_y^2 - \mu_y^2 + 2\mu_x \mu_y - 2 \mu_y + 2 \mu_x - \mu_x^2
\end{align*}

\begin{align*}
\mathrm{Bias^{RaLSGAN}}(\mathbb{P},\mathbb{Q}) &= \mathrm{\widehat{Div}^{RaLSGAN}}(\mathbb{P},\mathbb{Q}) - \mathrm{Div^{RaLSGAN}}(\mathbb{P},\mathbb{Q}) \\
&= -\sigma_x^2 -\mu_x^2 + 2 \mu_x \mu_y + 2 \mu_x - 2 \mu_y - \frac{1}{k} \sigma_y^2 -  \mu_y^2 +\sigma_x^2 + \mu_x^2 - 2\mu_x \mu_y - 2 \mu_x + 2 \mu_y + \mu_y^2 \\
&=- \frac{1}{k} \sigma_y^2 \\
\end{align*}

\begin{align*}
\mathrm{Bias^{RaLSGAN2}}(\mathbb{P},\mathbb{Q}) &= \mathrm{\widehat{Div}^{RaLSGAN2}}(\mathbb{P},\mathbb{Q}) - \mathrm{Div^{RaLSGAN2}}(\mathbb{P},\mathbb{Q}) \\
&= -\sigma_y^2 -\mu_y^2 + 2 \mu_x \mu_y - 2 \mu_y + 2 \mu_x - \frac{1}{k} \sigma_x^2 -  \mu_x^2 +\sigma_y^2 + \mu_y^2 - 2\mu_x \mu_y + 2 \mu_y - 2 \mu_x + \mu_x^2 \\
&=- \frac{1}{k} \sigma_x^2 \\
\end{align*}

\begin{align*}
\mathrm{Bias}^{RalfLSGAN} &= \mathrm{Bias}^{RaLSGAN}(\mathbb{P},\mathbb{Q}) + \mathrm{Bias}^{RaLSGAN2}(\mathbb{Q},\mathbb{P}) \\
&= -\frac{1}{k} \sigma_y^2 -\frac{1}{k} \sigma_x^2 \\
&= -\frac{1}{k} \left(\sigma_x^2 + \sigma_y^2 \right)
\end{align*}

Let \\
$\hat{\sigma}_x^2 = \frac{1}{(k-1)} \sum_{i=1}^{k} \left( C(x_i) - \frac{1}{k} \sum_{i=1}^{k} C(x_j) \right)$, \\  $\hat{\sigma}_y^2 = \frac{1}{(k-1)} \sum_{i=1}^{k} \left( C(y_i) - \frac{1}{k} \sum_{i=1}^{k} C(y_j) \right)$.

We know that $\hat{\sigma}_x^2$ and $\hat{\sigma}_y^2$ are unbiased estimators of $\sigma_x^2$ and $\sigma_y^2$ respectively. \\
Thus, if we add $\frac{1}{k} \hat{\sigma}_y^2$ to the objective function of RalfLSGAN and $\frac{1}{k} ( \hat{\sigma}_x^2 + \hat{\sigma}_y^2)$ to the objective function of RaLSGAN, we have that the new objective functions are unbiased.

\begin{align*}
\mathrm{\widehat{Div}^{RcLSGAN}}(\mathbb{P},\mathbb{Q}) &=\mathbb{E} \left[ \frac{1}{k} \sum_{i=1}^{k} f \left( C(x_i) - \frac{1}{2k}\sum_{j=1}^{k} \left( C(x_j) + C(y_j) \right)  \right) \right] \\
&= \mathbb{E} \left[ \frac{1}{k} \sum_{i=1}^{k} \left( -\left( C(x_i) - \frac{1}{2k}\sum_{j=1}^{k} \left( C(x_j) + C(y_j) \right) - 1 \right)^2 + 1 \right) \right] \\
&= \mathbb{E} \left[ \frac{1}{k} \sum_{i=1}^{k} \left( -C(x_i)^2 + \frac{1}{k} \sum_{j=1}^{k} C(x_i) \left(C(x_j) + C(y_j) \right) + 2C(x_i) - \frac{1}{k} \sum_{j=1}^{k} C(x_j) - \frac{1}{k} \sum_{j=1}^{k} C(y_j) \right.\right. \\
& \left.\left. - \frac{1}{4k^2} \left( \sum_{j=1}^{k} C(x_j) + C(y_j) \right)^2  \right) \right] \\
&= \frac{1}{k} \sum_{i=1}^{k} \left( -\mathbb{E} \left[C(x_i)^2 \right] + \frac{1}{k} \mathbb{E} \left[ C(x_i)^2 \right] + \frac{1}{k} \sum_{\substack{j=1 \\ j\neq i}}^{k} \mathbb{E} \left[ C(x_i) \right] \mathbb{E} \left[ C(x_j) \right] + \frac{1}{k} \sum_{j=1}^{k} \mathbb{E} \left[ C(x_i) \right] \mathbb{E} \left[ C(y_j) \right] \right. \\
&\left. + 2 \mathbb{E} \left[C(x_i) \right] - \frac{1}{k} \sum_{j=1}^{k} \mathbb{E} \left[ C(x_j) \right] - \frac{1}{k} \sum_{j=1}^{k} \mathbb{E} \left[ C(y_j) \right] - \frac{1}{4k^2}\sum_{j=1}^{k} \mathbb{E}[(C(x_j)+C(y_j))^2] \right. \\
&\left. - \frac{1}{4k^2}\sum_{\substack{r=1 \\ r\neq j}}^{k} \sum_{\substack{j=1}}^{k} \mathbb{E} [C(x_i)+C(y_i)] \mathbb{E}[C(x_r)+C(y_r)] \right) \\
&= \left(\frac{1}{k}-1\right)\left(\sigma_x^2 + \mu_x^2\right) + \frac{(k-1)}{k}\mu_x^2 + \mu_x \mu_y + 2 \mu_x - \mu_x - \mu_y \\
& - \frac{1}{4k} ((\sigma_x^2 +\mu_x^2) + 2\mu_x\mu_y  + (\sigma_y^2 +\mu_y^2)) - \frac{(k-1)}{4k} (\mu_x^2 + 2\mu_x\mu_y + \mu_y^2) \\
&= \frac{(1-k)}{k}\sigma_x^2 + \mu_x \mu_y + \mu_x - \mu_y - \frac{1}{4} \mu_x^2 - \frac{1}{2} \mu_x\mu_y - \frac{1}{4} \mu_y^2 - \frac{1}{4k}\sigma_x^2 - \frac{1}{4k}\sigma_y^2 \\
&= \frac{(.75-k)}{k}\sigma_x^2 - \frac{1}{4k}\sigma_y^2 - \frac{1}{4} \mu_x^2 - \frac{1}{4} \mu_y^2 + \frac{1}{2} \mu_x\mu_y + \mu_x - \mu_y
\end{align*}

\begin{align*}
\mathrm{\widehat{Div}^{RcLSGAN}}(\mathbb{P},\mathbb{Q}) &= \mathbb{E} \left[ \frac{1}{k} \sum_{i=1}^{k} \left( -\left( C(y_i) - \frac{1}{2k}\sum_{j=1}^{k} \left( C(x_j) + C(y_j) \right) + 1 \right)^2 + 1 \right) \right] \\
&= \mathbb{E} \left[ \frac{1}{k} \sum_{i=1}^{k} \left( -C(y_i)^2 + \frac{1}{k} \sum_{j=1}^{k} C(y_i) \left(C(x_j) + C(y_j) \right) - 2C(y_i) + \frac{1}{k} \sum_{j=1}^{k} C(x_j) + \frac{1}{k} \sum_{j=1}^{k} C(y_j) \right.\right. \\
& \left.\left. - \frac{1}{4k^2} \left( \sum_{j=1}^{k} C(x_j) + C(y_j) \right)^2  \right) \right] \\
&= \frac{1}{k} \sum_{i=1}^{k} \left( -\mathbb{E} \left[C(y_i)^2 \right] + \frac{1}{k} \mathbb{E} \left[ C(y_i)^2 \right] + \frac{1}{k} \sum_{\substack{j=1 \\ j\neq i}}^{k} \mathbb{E} \left[ C(y_i) \right] \mathbb{E} \left[ C(y_j) \right] + \frac{1}{k} \sum_{j=1}^{k} \mathbb{E} \left[ C(x_i) \right] \mathbb{E} \left[ C(y_j) \right] \right. \\
&\left. - 2 \mathbb{E} \left[C(y_i) \right] + \frac{1}{k} \sum_{j=1}^{k} \mathbb{E} \left[ C(x_j) \right] + \frac{1}{k} \sum_{j=1}^{k} \mathbb{E} \left[ C(y_j) \right] - \frac{1}{4k^2}\sum_{j=1}^{k} \mathbb{E}[(C(x_j)+C(y_j))^2] \right. \\
&\left. - \frac{1}{4k^2}\sum_{\substack{r=1 \\ r\neq j}}^{k} \sum_{\substack{j=1}}^{k} \mathbb{E} [C(x_i)+C(y_i)] \mathbb{E}[C(x_r)+C(y_r)] \right) \\
&= \left(\frac{1}{k}-1\right)\left(\sigma_y^2 + \mu_y^2\right) + \frac{(k-1)}{k}\mu_y^2 + \mu_x \mu_y - 2\mu_y + \mu_x + \mu_y \\
& - \frac{1}{4k} ((\sigma_x^2 +\mu_x^2) + 2\mu_x\mu_y  + (\sigma_y^2 +\mu_y^2)) - \frac{(k-1)}{4k} (\mu_x^2 + 2\mu_x\mu_y + \mu_y^2) \\
&= \frac{(1-k)}{k}\sigma_y^2 + \mu_x \mu_y + \mu_x - \mu_y - \frac{1}{4} \mu_x^2 - \frac{1}{2} \mu_x\mu_y - \frac{1}{4} \mu_y^2 - \frac{1}{4k}\sigma_x^2 - \frac{1}{4k}\sigma_y^2 \\
&= \frac{(.75-k)}{k}\sigma_y^2 - \frac{1}{4k}\sigma_x^2 - \frac{1}{4} \mu_x^2 - \frac{1}{4} \mu_y^2 + \frac{1}{2} \mu_x\mu_y + \mu_x - \mu_y
\end{align*}

\begin{align*}
\mathrm{Div^{RcLSGAN}}(\mathbb{P},\mathbb{Q}) &=\mathbb{E} \left[ f \left( C(x) - \frac{(\mu_x+\mu_y)}{2} \right) \right] \\
&=\mathbb{E} \left[ - \left( C(x) - \frac{(\mu_x+\mu_y)}{2} - 1 \right)^2 - 1 \right] \\
&=\mathbb{E} \left[ - C(x)^2 + C(x) (\mu_x+\mu_y) + 2 C(x) - (\mu_x+\mu_y) - \frac{(\mu_x+\mu_y)^2}{4} \right] \\
&= -\sigma_x^2 - \mu_x^2 + \mu_x^2 + \mu_x\mu_y + \mu_x - \mu_y - \frac{1}{4}(\mu_x^2 + \mu_y^2 + 2\mu_x\mu_y) \\
&= -\sigma_x^2 + \frac{1}{2}\mu_x\mu_y + \mu_x - \mu_y - \frac{1}{4}\mu_x^2 - \frac{1}{4}\mu_y^2
\end{align*}

\begin{align*}
\mathrm{Div^{RcLSGAN2}}(\mathbb{P},\mathbb{Q}) &=\mathbb{E} \left[ f \left( C(x) - \mu_y \right) \right] \\
&=\mathbb{E} \left[ - \left( C(y) - \frac{(\mu_x+\mu_y)}{2} + 1 \right)^2 - 1 \right] \\
&=\mathbb{E} \left[ - C(y)^2 + C(y) (\mu_x+\mu_y) - 2 C(y) + (\mu_x+\mu_y) - \frac{(\mu_x+\mu_y)^2}{4} \right] \\
&= -\sigma_y^2 - \mu_y^2 + \mu_y^2 + \mu_x\mu_y + \mu_x - \mu_y - \frac{1}{4}(\mu_x^2 + \mu_y^2 + 2\mu_x\mu_y) \\
&= -\sigma_x^2 + \frac{1}{2}\mu_x\mu_y + \mu_x - \mu_y - \frac{1}{4}\mu_x^2 - \frac{1}{4}\mu_y^2
\end{align*}

\begin{align*}
\mathrm{Bias^{RaLSGAN}}(\mathbb{P},\mathbb{Q}) &= \mathrm{\widehat{Div}^{RaLSGAN}}(\mathbb{P},\mathbb{Q}) - \mathrm{Div^{RaLSGAN}}(\mathbb{P},\mathbb{Q}) \\
&= \frac{3}{4k}\sigma_x^2 - \frac{1}{4k}\sigma_y^2
\end{align*}

\begin{align*}
\mathrm{Bias^{RaLSGAN2}}(\mathbb{P},\mathbb{Q}) &= \mathrm{\widehat{Div}^{RaLSGAN2}}(\mathbb{P},\mathbb{Q}) - \mathrm{Div^{RaLSGAN2}}(\mathbb{P},\mathbb{Q}) \\
&= \frac{3}{4k}\sigma_y^2 - \frac{1}{4k}\sigma_x^2
\end{align*}

\begin{align*}
\mathrm{Bias}^{RalfLSGAN} &= \mathrm{Bias}^{RaLSGAN}(\mathbb{P},\mathbb{Q}) + \mathrm{Bias}^{RaLSGAN2}(\mathbb{Q},\mathbb{P}) \\
&= \frac{3}{4k}\sigma_x^2 - \frac{1}{4k}\sigma_y^2 + \frac{3}{4k}\sigma_y^2 - \frac{1}{4k}\sigma_x^2 \\
&= \frac{1}{2k} \left(\sigma_x^2 + \sigma_y^2\right)
\end{align*}

Let \\
$\hat{\sigma}_x^2 = \frac{1}{(k-1)} \sum_{i=1}^{k} \left( C(x_i) - \frac{1}{k} \sum_{i=1}^{k} C(x_j) \right)$, \\  $\hat{\sigma}_y^2 = \frac{1}{(k-1)} \sum_{i=1}^{k} \left( C(y_i) - \frac{1}{k} \sum_{i=1}^{k} C(y_j) \right)$.

We know that $\hat{\sigma}_x^2$ and $\hat{\sigma}_y^2$ are unbiased estimators of $\sigma_x^2$ and $\sigma_y^2$ respectively. \\
Thus, if we subtract $\frac{1}{2k} ( \hat{\sigma}_x^2 + \hat{\sigma}_y^2)$ to the objective function of RcLSGAN, we have that the new objective functions are unbiased.

\subsubsection{HingeGAN}

$$f(x)=-\max(0,1-x) + 1$$
For simplicity:\\
Let $x'=C(x)$, $y_i' = C(y_i)$, $p(x)$ and $q(x)$ be the probability density functions of $x'$ and $y'_i$.

\begin{align*}
\mathrm{Div^{RaHingeGAN}}(\mathbb{P},\mathbb{Q}) &=\mathbb{E} \left[ f \left( C(x) - \frac{1}{k}\sum_{i=1}^{k} C(y_i) \right) \right] \\
&= \mathbb{E} \left[ -\max\left(0,  1 + \frac{1}{k}\sum_{i=1}^{k} y_i' - x' \right)+1 \right] \\
&= -\int_{-\infty}^{\infty}... \int_{-\infty}^{\infty}\int_{-\infty}^{1+\frac{1}{k}\sum_{i=1}^{k}y_i'} \left( 1+\frac{1}{k}\sum_{i=1}^{k} y_i' - x\right)p(x)q(y)...q(y)dx dy_1 ... dy_k
\end{align*}
This is non-linear and we cannot derive a close-form.

\subsection{Architecture}

\begin{tabular}{c}
	Generator \\
	\toprule\midrule
	$z \in \mathbb{R}^{128} \sim N(0,I)$ \\
	\midrule
	linear, 128 -> 512*4*4 \\
	\midrule
	Reshape, 512*4*4 -> 512 x 4 x 4 \\
	\midrule
	ConvTranspose2d 4x4, stride 2, pad 1, 512->256 \\
	\midrule
	BN and ReLU \\
	\midrule
	ConvTranspose2d 4x4, stride 2, pad 1, 256->128 \\
	\midrule
	BN and ReLU \\
	\midrule
	ConvTranspose2d 4x4, stride 2, pad 1, 128->64 \\
	\midrule
	BN and ReLU \\
	\midrule
	ConvTranspose2d 3x3, stride 1, pad 1, 64->3 \\
	\midrule
	Tanh \\
	\bottomrule
\end{tabular} 
\quad
\begin{tabular}{c}
	Discriminator \\
	\toprule\midrule
	$x \in \mathbb{R}^{\text{3x32x32}}$ \\
	\midrule
	Conv2d 3x3, stride 1, pad 1, 3->64 \\
	\midrule
	LeakyReLU 0.1 \\
	\midrule
	Conv2d 4x4, stride 2, pad 1, 64->64 \\
	\midrule
	LeakyReLU 0.1 \\
	\midrule
	Conv2d 3x3, stride 1, pad 1, 64->128 \\
	\midrule
	LeakyReLU 0.1 \\
	\midrule
	Conv2d 4x4, stride 2, pad 1, 128->128 \\
	\midrule
	LeakyReLU 0.1 \\
	\midrule
	Conv2d 3x3, stride 1, pad 1, 128->256 \\
	\midrule
	LeakyReLU 0.1 \\
	\midrule
	Conv2d 4x4, stride 2, pad 1, 256->256 \\
	\midrule
	LeakyReLU 0.1 \\
	\midrule
	Conv2d 3x3, stride 1, pad 1, 256->512 \\
	\midrule
	Reshape, 512 x 4 x 4 -> 512*4*4 \\
	\midrule
	linear, 512*4*4 -> 1 \\
	\bottomrule
\end{tabular}


\begin{thebibliography}{24}
	\providecommand{\natexlab}[1]{#1}
	\providecommand{\url}[1]{\texttt{#1}}
	\expandafter\ifx\csname urlstyle\endcsname\relax
	\providecommand{\doi}[1]{doi: #1}\else
	\providecommand{\doi}{doi: \begingroup \urlstyle{rm}\Url}\fi
	
	\bibitem[Goodfellow et~al.(2014)Goodfellow, Pouget-Abadie, Mirza, Xu,
	Warde-Farley, Ozair, Courville, and Bengio]{GAN}
	Ian Goodfellow, Jean Pouget-Abadie, Mehdi Mirza, Bing Xu, David Warde-Farley,
	Sherjil Ozair, Aaron Courville, and Yoshua Bengio.
	\newblock Generative adversarial nets.
	\newblock In Z.~Ghahramani, M.~Welling, C.~Cortes, N.~D. Lawrence, and K.~Q.
	Weinberger, editors, \emph{Advances in Neural Information Processing Systems
		27}, pages 2672--2680. Curran Associates, Inc., 2014.
	\newblock URL
	\url{http://papers.nips.cc/paper/5423-generative-adversarial-nets.pdf}.
	
	\bibitem[Karras et~al.(2017)Karras, Aila, Laine, and
	Lehtinen]{karras2017progressive}
	Tero Karras, Timo Aila, Samuli Laine, and Jaakko Lehtinen.
	\newblock Progressive growing of gans for improved quality, stability, and
	variation.
	\newblock \emph{arXiv preprint arXiv:1710.10196}, 2017.
	
	\bibitem[Karras et~al.(2018)Karras, Laine, and Aila]{karras2018style}
	Tero Karras, Samuli Laine, and Timo Aila.
	\newblock A style-based generator architecture for generative adversarial
	networks.
	\newblock \emph{arXiv preprint arXiv:1812.04948}, 2018.
	
	\bibitem[Mao et~al.(2017)Mao, Li, Xie, Lau, Wang, and Smolley]{LSGAN}
	Xudong Mao, Qing Li, Haoran Xie, Raymond~YK Lau, Zhen Wang, and Stephen~Paul
	Smolley.
	\newblock Least squares generative adversarial networks.
	\newblock In \emph{2017 IEEE International Conference on Computer Vision
		(ICCV)}, pages 2813--2821. IEEE, 2017.
	
	\bibitem[Miyato et~al.(2018)Miyato, Kataoka, Koyama, and
	Yoshida]{miyato2018spectral}
	Takeru Miyato, Toshiki Kataoka, Masanori Koyama, and Yuichi Yoshida.
	\newblock Spectral normalization for generative adversarial networks.
	\newblock \emph{arXiv preprint arXiv:1802.05957}, 2018.
	
	\bibitem[Arjovsky et~al.(2017)Arjovsky, Chintala, and Bottou]{WGAN}
	Martin Arjovsky, Soumith Chintala, and L{\'e}on Bottou.
	\newblock Wasserstein generative adversarial networks.
	\newblock In \emph{International Conference on Machine Learning}, pages
	214--223, 2017.
	
	\bibitem[Nowozin et~al.(2016)Nowozin, Cseke, and Tomioka]{F-GAN}
	Sebastian Nowozin, Botond Cseke, and Ryota Tomioka.
	\newblock f-gan: Training generative neural samplers using variational
	divergence minimization.
	\newblock In D.~D. Lee, M.~Sugiyama, U.~V. Luxburg, I.~Guyon, and R.~Garnett,
	editors, \emph{Advances in Neural Information Processing Systems 29}, pages
	271--279. Curran Associates, Inc., 2016.
	\newblock URL
	\url{http://papers.nips.cc/paper/6066-f-gan-training-generative-neural-samplers-using-variational-divergence-minimization.pdf}.
	
	\bibitem[Mroueh and Sercu(2017)]{Fisher}
	Youssef Mroueh and Tom Sercu.
	\newblock Fisher gan.
	\newblock In I.~Guyon, U.~V. Luxburg, S.~Bengio, H.~Wallach, R.~Fergus,
	S.~Vishwanathan, and R.~Garnett, editors, \emph{Advances in Neural
		Information Processing Systems 30}, pages 2513--2523. Curran Associates,
	Inc., 2017.
	\newblock URL \url{http://papers.nips.cc/paper/6845-fisher-gan.pdf}.
	
	\bibitem[M{\"u}ller(1997)]{muller1997integral}
	Alfred M{\"u}ller.
	\newblock Integral probability metrics and their generating classes of
	functions.
	\newblock \emph{Advances in Applied Probability}, 29\penalty0 (2):\penalty0
	429--443, 1997.
	
	\bibitem[Jolicoeur-Martineau(2018{\natexlab{a}})]{jolicoeur2018beyonddivergence}
	Alexia Jolicoeur-Martineau.
	\newblock Gans beyond divergence minimization.
	\newblock \emph{arXiv preprint arXiv:1809.02145}, 2018{\natexlab{a}}.
	
	\bibitem[Jolicoeur-Martineau(2018{\natexlab{b}})]{RGAN}
	Alexia Jolicoeur-Martineau.
	\newblock The relativistic discriminator: a key element missing from standard
	gan.
	\newblock \emph{arXiv preprint arXiv:1807.00734}, 2018{\natexlab{b}}.
	
	\bibitem[Brock et~al.(2018)Brock, Donahue, and Simonyan]{BigGAN}
	Andrew Brock, Jeff Donahue, and Karen Simonyan.
	\newblock Large scale gan training for high fidelity natural image synthesis.
	\newblock \emph{arXiv preprint arXiv:1809.11096}, 2018.
	
	\bibitem[Mroueh et~al.(2017)Mroueh, Li, Sercu, Raj, and
	Cheng]{mroueh2017sobolev}
	Youssef Mroueh, Chun-Liang Li, Tom Sercu, Anant Raj, and Yu~Cheng.
	\newblock Sobolev gan.
	\newblock \emph{arXiv preprint arXiv:1711.04894}, 2017.
	
	\bibitem[Ioffe and Szegedy(2015)]{BatchNorm}
	Sergey Ioffe and Christian Szegedy.
	\newblock Batch normalization: Accelerating deep network training by reducing
	internal covariate shift.
	\newblock \emph{arXiv preprint arXiv:1502.03167}, 2015.
	
	\bibitem[Lehmann(1951)]{lehmann1951consistency}
	Eric~L Lehmann.
	\newblock Consistency and unbiasedness of certain nonparametric tests.
	\newblock \emph{The Annals of Mathematical Statistics}, pages 165--179, 1951.
	
	\bibitem[Hoeffding(1992)]{hoeffding1992class}
	Wassily Hoeffding.
	\newblock A class of statistics with asymptotically normal distribution.
	\newblock In \emph{Breakthroughs in Statistics}, pages 308--334. Springer,
	1992.
	
	\bibitem[Paszke et~al.(2017)Paszke, Gross, Chintala, Chanan, Yang, DeVito, Lin,
	Desmaison, Antiga, and Lerer]{pytorch}
	Adam Paszke, Sam Gross, Soumith Chintala, Gregory Chanan, Edward Yang, Zachary
	DeVito, Zeming Lin, Alban Desmaison, Luca Antiga, and Adam Lerer.
	\newblock Automatic differentiation in pytorch.
	\newblock 2017.
	
	\bibitem[Kingma and Ba(2014)]{Adam}
	Diederik~P Kingma and Jimmy Ba.
	\newblock Adam: A method for stochastic optimization.
	\newblock \emph{arXiv preprint arXiv:1412.6980}, 2014.
	
	\bibitem[Krizhevsky(2009)]{krizhevsky2009learning}
	Alex Krizhevsky.
	\newblock Learning multiple layers of features from tiny images.
	\newblock 2009.
	
	\bibitem[Liu et~al.(2015)Liu, Luo, Wang, and Tang]{liu2015faceattributes}
	Ziwei Liu, Ping Luo, Xiaogang Wang, and Xiaoou Tang.
	\newblock Deep learning face attributes in the wild.
	\newblock In \emph{Proceedings of International Conference on Computer Vision
		(ICCV)}, 2015.
	
	\bibitem[Zhang et~al.(2008)Zhang, Sun, and Tang]{cat}
	Weiwei Zhang, Jian Sun, and Xiaoou Tang.
	\newblock Cat head detection-how to effectively exploit shape and texture
	features.
	\newblock In \emph{European Conference on Computer Vision}, pages 802--816.
	Springer, 2008.
	
	\bibitem[Heusel et~al.(2017)Heusel, Ramsauer, Unterthiner, Nessler, Klambauer,
	and Hochreiter]{heusel2017gans}
	Martin Heusel, Hubert Ramsauer, Thomas Unterthiner, Bernhard Nessler,
	G{\"u}nter Klambauer, and Sepp Hochreiter.
	\newblock Gans trained by a two time-scale update rule converge to a nash
	equilibrium.
	\newblock \emph{arXiv preprint arXiv:1706.08500}, 2017.
	
	\bibitem[Borji(2018)]{borji2018pros}
	Ali Borji.
	\newblock Pros and cons of gan evaluation measures.
	\newblock \emph{arXiv preprint arXiv:1802.03446}, 2018.
	
	\bibitem[Arjovsky and Bottou(2017)]{GANTheorems}
	Martin Arjovsky and L{\'e}on Bottou.
	\newblock Towards principled methods for training generative adversarial
	networks.
	\newblock \emph{arXiv preprint arXiv:1701.04862}, 2017.
	
\end{thebibliography}
\end{document}